\documentclass[lettersize,journal]{IEEEtran}
\usepackage{amsmath,amsfonts}
\usepackage{array}
\usepackage[caption=false,font=normalsize,labelfont=sf,textfont=sf]{subfig}
\usepackage{textcomp}
\usepackage{stfloats}
\usepackage{url}
\usepackage{verbatim}
\usepackage{graphicx}
\usepackage{cite}
\usepackage[english]{babel}
\usepackage[utf8]{inputenc}
\usepackage{balance}
\usepackage{xspace}
\usepackage{bbm}
\usepackage{rotating}
\usepackage{url}
\usepackage{csquotes}
\usepackage{subfig}
\usepackage{paralist}
\usepackage{multirow}
\usepackage{multicol}
\usepackage{amsmath,amssymb,mathtools,amsthm}
\usepackage{booktabs}
\usepackage{xfrac}
\usepackage{array}
\usepackage{algorithm}
\usepackage[noend]{algpseudocode}
\usepackage{textcomp}
\usepackage{siunitx}
\sisetup{detect-all, binary-units}
\usepackage{microtype}
\usepackage[usenames,dvipsnames]{xcolor}
\usepackage{pgfplots}
\pgfplotsset{compat=newest,unit code/.code={\si{#1}},plot coordinates/math parser=false,grid style={lightgray}, ylabel right/.style={
        after end axis/.append code={
            \node [rotate=90, anchor=north] at (rel axis cs:1,0.5) {#1};
        }   
    }}
\usepgfplotslibrary{units,external,groupplots,fillbetween}
\usetikzlibrary{positioning,angles,quotes,patterns,shapes,spy, shapes.misc,backgrounds}
\tikzstyle{block} = [draw, rectangle, minimum height=2em, minimum width=5em]
\tikzstyle{addon} = [draw, rectangle, rounded corners]
\tikzstyle{pinstyle} = [pin edge={<-,thin,black}]
\tikzstyle{pinstyle2} = [pin edge={->,thin,black}]
\tikzstyle{mult} = [draw, isosceles triangle]
\tikzstyle{circ} = [draw, circle]
\tikzstyle{coord} = [coordinate]
\tikzstyle{circ2} = [draw, circle,minimum width=3pt, inner sep=0]
\tikzset{>=latex}
\tikzset{radiation/.style={{decorate,decoration={expanding
waves,angle=90,segment length=4pt}}}}
\usetikzlibrary{math}
\tikzmath
{
  function symlog(\x,\a){
    \yLarge = ((\x>\a) - (\x<-\a)) * (ln(max(abs(\x/\a),1)) + 1);
    \ySmall = (\x >= -\a) * (\x <= \a) * \x / \a ;
    return \yLarge + \ySmall ;
  };
  function symexp(\y,\a){
    \xLarge = ((\y>1) - (\y<-1)) * \a * exp(abs(\y) - 1) ;
    \xSmall = (\y>=-1) * (\y<=1) * \a * \y ;
    return \xLarge + \xSmall ;
  };
}

\graphicspath{{./figures/}}

\usepackage{scalerel,stackengine}

\makeatletter
\newcommand{\myitem}[1]{%
\item[#1]\protected@edef\@currentlabel{#1}%
}
\makeatother

\newtheorem{theo}{Theorem}
\newtheorem{lem}{Lemma}
\newtheorem{defi}{Definition}
\newtheorem{prop}{Proposition}
\newtheorem{cor}{Corollary}
\newtheorem{remark}{Remark}
\newtheorem{assume}{Assumption}

\newcommand{\fakepar}[1]{\vspace{1mm}\noindent\textbf{#1.}}

\DeclareSIUnit{\belmilliwatt}{Bm}
\DeclareSIUnit{\dBm}{\deci\belmilliwatt}

\DeclareMathOperator*{\E}{\mathbb{E}}

\newcommand{\norm}[1]{\left\lVert#1\right\rVert}
\newcommand{\abs}[1]{\left\lvert#1\right\rvert}

\DeclareMathOperator*{\R}{\mathbb{R}}

\newcommand{\transp}{\text{T}}
\DeclareMathOperator*{\mmd}{\mathrm{MMD}}
\DeclareMathOperator*{\mmdsq}{\mathrm{MMD}^2}

\newcommand{\x}[1]{\tilde{x}_{#1}}

\let\originalleft\left
\let\originalright\right
\renewcommand{\left}{\mathopen{}\mathclose\bgroup\originalleft}
\renewcommand{\right}{\aftergroup\egroup\originalright}

\newcommand\figref[1]{Figure~\ref{#1}}

\newcommand\secref[1]{Section~\ref{#1}}

\newcommand{\eg}{e.g.,\xspace}
\newcommand{\ie}{i.e.,\xspace}

\newcommand{\capt}[1]{\mdseries{\emph{#1}}}

\newcommand{\iid}{i.i.d.\xspace}

\usepackage{ifthen}
\newboolean{authnotes}

\setboolean{authnotes}{true}

\ifthenelse{\boolean{authnotes}}
{
\newcommand{\am}[1]{\footnote{{\bf\color{blue!70!black} Alon: #1}}}
\newcommand{\db}[1]{\footnote{{\bf\color{green!50!black} Dominik: #1}}}
\newcommand{\st}[1]{\footnote{{\bf\color{purple!90!black} Sebastian: #1}}}
\newcommand{\mt}[1]{\footnote{{\bf\color{orange!50!black} Matteo: #1}}}
}
{
\newcommand{\am}[1]{}
\newcommand{\db}[1]{}
\newcommand{\st}[1]{}
\newcommand{\mt}[1]{}
}

\usepackage[hidelinks]{hyperref}
\hyphenation{op-tical net-works semi-conduc-tor IEEE-Xplore}


\usepackage{fancyhdr}
\newcommand{\mytitle}{\textbf{Accepted final version.}
To appear in \textit{IEEE Transactions on Robotics, 2024}.\\
\copyright 2024 IEEE. Personal use of this material is permitted. Permission
from IEEE must be obtained for all other uses, in any current or future
media, including reprinting/republishing this material for advertising or
promotional purposes, creating new collective works, for resale or
redistribution to servers or lists, or reuse of any copyrighted component of
this work in other works.}
\fancyhf{}      
\fancyfoot[L]{\normalfont \sffamily  \scriptsize \mytitle}      
\addtolength{\footskip}{-10pt}    


\begin{document}

\title{Safe reinforcement learning in uncertain contexts}

\author{{Dominik Baumann and Thomas B.~Sch\"{o}n,~\IEEEmembership{Senior Member,~IEEE}}
\thanks{Dominik Baumann is with the Department of Electrical Engineering and Automation, Aalto University, Espoo, Finland, and the Department of Information Technology, Uppsala University, Sweden (e-mail: dominik.baumann@aalto.fi).
Thomas B.\ Sch\"{o}n is with the Department of Information Technology, Uppsala University, Sweden (e-mail: thomas.schon@it.uu.se).

This work has received funding from the Federal Ministry of Education and Research (BMBF) and
the Ministry of Culture and Science of the German State of North Rhine-Westphalia
(MKW) under the Excellence Strategy of the Federal Government and the Länder, the project \emph{NewLEADS - New Directions in Learning Dynamical Systems} (contract number: 621-2016-06079), funded by the Swedish Research Council, and the \emph{Kjell och Märta Beijer Foundation}. A donation by Mitsubishi Electric Research Laboratories (MERL) was used to purchase the equipment used for the experiments.}
}

\markboth{IEEE Transactions on Robotics,~Vol.~14, No.~8, August~2021}%
{Baumann \MakeLowercase{\textit{et al.}}: Safe reinforcement learning in uncertain contexts}

\IEEEpubid{0000--0000/00\$00.00~\copyright~2022 IEEE}

\maketitle

\thispagestyle{fancy}   
\pagestyle{empty}

\begin{abstract}
When deploying machine learning algorithms in the real world, guaranteeing safety is an essential asset.
Existing safe learning approaches typically consider continuous variables, \ie regression tasks.
However, in practice, robotic systems are also subject to discrete, external environmental changes, \eg having to carry objects of certain weights or operating on frozen, wet, or dry surfaces.
Such influences can be modeled as discrete \emph{context} variables.
In the existing literature, such contexts are, if considered, mostly assumed to be known.
In this work, we drop this assumption and show how we can perform safe learning when we cannot directly measure the context variables.
To achieve this, we derive frequentist guarantees for multi-class classification, allowing us to estimate the current context from measurements.
Further, we propose an approach for identifying contexts through experiments.
We discuss under which conditions we can retain theoretical guarantees and demonstrate the applicability of our algorithm on a Furuta pendulum with camera measurements of different weights that serve as contexts.
\end{abstract}

\begin{IEEEkeywords}
Safe reinforcement learning, multi-class classification, frequentist bounds.
\end{IEEEkeywords}

\section{Introduction}
\IEEEPARstart{W}{hen} learning on real and potentially expensive robotics hardware, respecting safety constraints is instrumental.
In response to this need, safe learning algorithms have been developed that provide different forms of safety certificates~\cite{brunke2021safe}.
One such algorithm is \textsc{SafeOpt}~\cite{berkenkamp2021bayesian}, which guarantees to find an optimal policy while never violating any safety constraints during the search with high probability.

Most safe learning algorithms consider only the internal dynamics of a robot (and possibly external constraints).
However, when learning in the real world, robotic systems are subject to changes in their environment that influence their dynamics. 
Consider the illustrative example shown in \figref{fig:exp_setup}.
We have a Furuta pendulum and aim to learn a balancing policy.
Different kinds of weights can be added to the top of the pole.
These weights clearly influence the dynamics of the pendulum.
However, the pendulum cannot control which weight is added.
Including the weight as an external parameter in the policy optimization is then challenging for two reasons.
First, as the pendulum cannot control which weight is added, it cannot actively generate informative data.
Further, we may have only a few examples (\eg only three different weights in \figref{fig:exp_setup}). 
This makes it hard to perform a regression. 
Second, directly including the weights in the policy optimization increases the dimensionality of the parameter space, which can be computationally demanding for safe learning algorithms.
To account for that,~\cite{berkenkamp2021bayesian} proposes such parameters to be included as discrete context variables and shows that \textsc{SafeOpt} can successfully deal with a mixed continuous and discrete parameter space.
Nevertheless, they assume that the context variable, \eg the weight, is known.
In \figref{fig:exp_setup}, we only have a camera that can capture the current weight.
This is a typical industrial setting, where cameras are often used to track work pieces.
Yet, inferring the weight of an object from image data is not possible in general~\cite{standley2017image2mass}.

\begin{figure}
\centering
\input{tikz/context_setup}
\caption{Our experimental setup.
\capt{We aim at optimizing a balancing controller for a Furuta pendulum whose dynamics can be altered by adding (removing) weights to (from) its pole. 
Our algorithm tries to infer the current weight from image data and resorts to identifying it through dedicated experiments if the image data is not sufficiently informative.}}
\label{fig:exp_setup}
\end{figure}

\IEEEpubidadjcol

An alternative to inferring the weight from data would be identifying the context through experiments.
Such an algorithm is proposed in~\cite{achterhold2021explore}.
The downside of this approach is that it requires additional experimentation and cannot provide formal guarantees on identifying the correct context.
Thus, it cannot straightforwardly be combined with a safe learning algorithm.
Suppose we instead understand trajectories as samples from different probability distributions. 
In that case, we can investigate whether the distribution generating the current sample is identical to the distribution of a sample we saw in the past.
One measure for comparing distributions is the maximum mean discrepancy (MMD)~\cite{gretton2012kernel}, whose applicability to dynamical systems has been discussed in~\cite{solowjow2020kernel}.
In this article, we show how we can extend the formal guarantees the MMD provides for context identification in a safe learning algorithm.

Still, the problem of additional experimentation remains.
Thus, over time, we would like to learn a classifier that can, at least in some cases, infer the current weight from image data.
In particular, the classifier should output a probability that the camera image corresponds to a certain weight.
This is a standard multi-class classification setting, for which various approaches exist, \eg support vector machines (SVMs)~\cite{cervantes2020comprehensive}, neural networks~\cite{he2016deep}, or conditional mean embeddings (CMEs)~\cite{hsu2018hyperparameter}.
However, if we want to combine them with a safe learning algorithm, we again require theoretical guarantees, which typical classification approaches cannot provide in the required form.
Thus, we derive frequentist uncertainty bounds for multi-class classification based on CMEs and show how they can be used within a safe learning algorithm.

In summary, we propose an algorithm that mainly consists of two parts.
We propose to train a classifier that yields a probabilistic score to relate, for instance, camera images to weights.
For this classifier, we derive frequentist uncertainty bounds to use its output in an existing safe exploration algorithm, such as \textsc{SafeOpt}, and provide safety guarantees.
If the uncertainty of our classifier is too high, we propose to identify the current context.
Also here, we provide guarantees that can be used in safe exploration.
Further, we update our classifier to sharpen the uncertainty bounds whenever the current context has been identified.

\fakepar{Contributions}
In this paper, we
\begin{itemize}
    \item derive frequentist uncertainty intervals for multi-class classification;
    \item develop a context identification method with statistical guarantees;
    \item combine multi-class classification with frequentist uncertainty intervals and context identification in an algorithm that allows for safe learning in uncertain contexts;
    \item evaluate the algorithm on a Furuta pendulum~\cite{furuta1992swing} with camera images of weights serving as contexts.
\end{itemize}

\section{Related work}
\label{sec:rel_work}

This paper proposes an algorithm for learning safe optimal policies in context-dependent dynamical systems with uncertain contexts.
Here, we relate our contribution to the literature.

\fakepar{Safe Learning}
For a general overview of safe learning, we refer the reader to~\cite{brunke2021safe}.
Most of these works focus on pure regression tasks without discrete parameters.
Introducing discrete parameters as context variables initially emerged in the bandit setting~\cite{krause2011contextual,langford2007epoch,wang2005bandit,lai1995machine}. 
In~\cite{berkenkamp2021bayesian}, it was shown that the concept of discrete contexts could also benefit safe policy optimization for dynamical systems.
In~\cite{berkenkamp2021bayesian} and in the works on bandit settings, it is typically assumed that the current context is known to the learning agent.
In reality, this may not always be the case.
Thus, we here consider a setting where the agent receives some sensor information that it can use to infer the context, but it cannot directly measure the current context.
Another approach that considers unknown contexts in a Bayesian framework is presented in~\cite{feng2020high}.
However, the authors assume no information about the context, which differs from our setting, and they implicitly model the context. 
In contrast, we explicitly account for it in the learning problem.

\fakepar{Learning with unobserved context}
Some works consider optimizing policies for context-dependent dynamical systems with unobserved contexts~\cite{feng2020high,swamy2022sequence}.
The setting we consider is different in that we assume some information about the context to be available even though it cannot be unambiguously determined through the measurements.
Other approaches, such as~\cite{konig2021safe}, assume access to some low-dimensional task parameter that is directly correlated with the context.
This allows them to integrate the task parameter directly into the learning algorithm.
In~\cite{konig2021safe}, the authors further assume that they can observe the task parameter over a range of values.
Neither of these assumptions are satisfied in our setting.
We assume access to a high-dimensional observation, such as image data, that is related to the context.
Thus, we cannot directly include the observation in the learning algorithm due to the scalability properties of Gaussian process regression.
Moreover, we consider a setting in which we receive discrete observations, such as images of traffic signs, where learning a classifier is more efficient than trying to learn a continuous model over all possible pixel values.
In control theory, discrete contexts that alter the dynamics of a system may be interpreted as disturbances.
Assuming knowledge of worst-case bounds for those disturbances, we can design robust controllers that provide stability guarantees while sacrificing performance~\cite{zhou1996robust}.
We do not assume knowledge of worst-case bounds. 
Instead, we assume that there are measurements that can be used to estimate the current context.
This idea is similar to disturbance observers that try to reconstruct the disturbance given the measurements and then compensate for them~\cite{chen2015disturbance}.
Both robust control techniques and disturbance observers require access to a mathematical model of the underlying system, while the approach we present herein is model-free.

\fakepar{Context identification}
A dedicated context identification algorithm is proposed in~\cite{achterhold2021explore}.
Nevertheless, it lacks theoretical guarantees.
Thus, we use the MMD initially introduced in~\cite{gretton2012kernel} and extended in~\cite{solowjow2020kernel} to dynamical systems.
Our main idea is that contexts change the dynamics of the system. 
Thus, by comparing the current trajectory with trajectories collected in the past, we can infer the current value of the context variable.
This boils down to comparing the probability distributions generating the trajectories.
For this, also other measures than the MMD are of course possible; see~\cite {sriperumbudur2012empirical} for an overview.
We here choose the MMD since it allows us to compare probability distributions without actually estimating them, provides theoretical guarantees, and can be computed efficiently.

\fakepar{Classification}
While various algorithms for classification exist, we here require one that provides input-dependent frequentist uncertainty intervals.
This is especially important since classifiers tend to be over-confident~\cite{bai2021don,kull2019beyond}.
In the literature, we find general bounds for multi-class classification~\cite{hsu2018hyperparameter,lei2019data}, probably approximately correct (PAC) bounds for Gaussian process classification~\cite{seeger2002pac}, bounds for distribution-free calibration in binary classification~\cite{gupta2020distribution}, for general CMEs~\cite{park2020measure,smale2007learning}, as well as their application to classification~\cite{hsu2018hyperparameter}.
Nevertheless, all these bounds have in common that they are \emph{uniform}.
While uniform bounds are strong mathematical statements, they are not the most suitable objects for our problem setting.
In particular, they typically only hold for the misclassification risk averaged over the entire input space.
Instead, we require concrete statements about the misclassification risk at a particular input location.
Intuitively, we should be more confident about our estimates in regions of the input space where we have already seen many data points.
This is also pointed out as a potential extension in the appendix of~\cite{hsu2018hyperparameter}.
Further, existing learning theoretic bounds often aim to provide convergence rates, and actually applying those bounds to practical settings with finitely many data points is non-trivial.
A main contribution of our work is the derivation of finite-sample frequentist bounds for classification with CMEs.
Another approach that considers input-dependent convergence bounds is presented in~\cite{maddalena2021deterministic}.
In that work, the authors provide deterministic uncertainty intervals for kernel ridge and support vector regression, although not aiming at classification settings. 
In \secref{sec:class_eval}, we discuss how their bounds can be applied to classification and why they are non-informative in that setting.

\section{Problem setting and background}

We consider a setting in which an RL agent seeks to optimize a policy but needs to guarantee that it satisfies safety guarantees throughout exploration.
Further, the dynamics are influenced by an unobserved context variable.
Our main contributions are a context identification method with statistical guarantees and frequentist uncertainty intervals for multi-class classification that together allow the context to be inferred from external measurements.
We propose to combine these two contributions with a safe learning algorithm.
The concrete safe learning algorithm that is actually in charge of optimizing the policy is, thus, \emph{independent} of our contributions, and basically, any RL algorithm that provides safety guarantees could be used.
Nevertheless, to make the problem setting more concrete, we will here consider the algorithm from~\cite{berkenkamp2021bayesian} and introduce this setting in \secref{sec:background}.
The problem that we address in this paper is formulated in \secref{sec:probl_setting}.

\subsection{Background}
\label{sec:background}

We consider a dynamic system
\begin{equation}
    \label{eqn:sys}
    x(k+1) = z(x(k),u(k),c)
\end{equation}
with discrete time index $k\in\mathbb{N}$, state $x(k)\in\mathcal{X}\subseteq\R^\ell$, and input $u(k)\in\R^m$, whose dynamics depend on a context parameter $c\in\mathcal{C}\subseteq\mathbb{N}$.
For this system, we want to learn a policy $u(k)=\pi(x(k), c, a)$, parametrized by parameters $a\in\mathcal{A}\subseteq\R^d$ that maximizes an unknown reward function $f:\, \mathcal{A}\times\mathcal{C}\to\R$ while guaranteeing safety.
Safety is encoded through (unknown) constraint functions $g_i:\,\mathcal{A}\times\mathcal{C}\to\R$, $i\in\{1,\ldots,q\}$.

We assume that the reward and constraint functions are unknown. 
However, we can receive (noisy) measurements of both by doing experiments: we select a parameterization $a$, perform an experiment, and afterward receive measurements of $f$ and $g_i$ for all $i\in\{1,\ldots,q\}$.
That way, we can, over time, find the optimum of $f$.
Importantly, we seek to provide safety guarantees for each exploration experiment.
Thus, the overall optimization problem is
\begin{equation}
\label{eqn:safeopt}
    \max_{a\in\mathcal{A}} f(a,c) \text{ s.t. } g_i(a,c)\ge 0 \text{ for all }i\in\{1,\ldots,q\}. 
\end{equation}
We need a few assumptions to enable safe exploration despite unknown dynamics and constraints~\cite{berkenkamp2021bayesian}.
\begin{assume}
\label{ass:safeopt_rkhs_norm}
The reward function $f$ and the constraint functions $g_i$, with $i\in\{1,\ldots,q\}$, have bounded norm in a reproducing kernel Hilbert space (RKHS).
\end{assume}
\begin{assume}
\label{ass:safeopt_meas_noise}
After each exploration experiment, we receive noisy measurements of the reward and constraint functions.
Those measurements are perturbed by $\sigma$-sub Gaussian measurement noise.
\end{assume}
\begin{assume}
\label{ass:safeopt_starting_point}
We are given at least one safe parameter vector $a$, for which we have, for all contexts $c\in\mathcal{C}$ and for all $i\in\{1,\ldots,q\}$, that $g_i(a,c)>0$ with probability at least $1-\delta_\mathrm{safe}$.
\end{assume}
Assumption~\ref{ass:safeopt_starting_point} may seem relatively strong as it requires parameter vectors that are safe under all contexts. 
It is required since we need some policy to start with, and at least during the first experiment, we cannot estimate the context without prior knowledge.
In practice, such an initial safe parameter can have arbitrarily bad performance.
For instance, when considering a mobile robot supposed to reach some target while not colliding with obstacles where the contexts are different surfaces, an initial safe policy could barely move the robot.
Such a policy would be safe but have a close to minimum reward.
Still, it would be enough as a starting point for our algorithm.

\subsection{Problem setting}
\label{sec:probl_setting}
In this work, we drop the often-adopted assumption that the discrete context parameter $c$ is known or can be precisely measured.
Instead, we assume that we receive measurements $y\in\mathcal{Y}\subseteq\R^s$ that reveal information about the context.
These could be temperature measurements that induce a probability of whether or not the road is frozen for an autonomous car or camera images that allow us to reason about the weight of an object that a robot is supposed to manipulate.
To automate this reasoning while still guaranteeing safety as above, we need an efficient way to estimate the probability of being in a particular context that itself also provides guarantees.

We define the underlying probability space as $(\Omega, \mathcal{F}, P)$ with random variables $Y:\,\Omega\to\mathcal{Y}$ and $C:\,\Omega\to\mathcal{C}$ that take values in $\mathcal{Y}$ and $\mathcal{C}$, respectively and whose respective probability distributions we denote by $P_Y$ and $P_C$.
In this work, we aim at estimating the probability $p_{c}(y)\coloneqq \mathbb{P}(C=c\mid Y=y)$, \ie the probability of context $c\in\mathcal{C}\subset\mathbb{N}$ given the current measurements $y\in\mathcal{Y}\subseteq\R^s$.
Furthermore, we aim at deriving high probability uncertainty bounds for the estimate $\hat{p}_c$.
Suppose we are given $n$ tuples $(y, c)$ with $n>0$.
For each context $c$, we generate a vector $\textbf{c}\in\R^{n}$, where entry $c_j$ is 1 if in the corresponding tuple of sample $j$ the context was $c_j$ and 0 otherwise.
We aim at finding upper bounds $\epsilon_c(y, \delta,n)$ such that for each $c$ we have with probability at least $1-\delta_\mathrm{class}$, where $\delta_\mathrm{class}\in(0, 1)$,
\begin{align}
\label{eqn:prob_setting}
\abs{p_c(y)-\hat{p}_c(y)} \le \epsilon_c(y, \delta,n)
\end{align}
at the current input $y$.

However, in the beginning, when we have not gathered any data, the bounds~\eqref{eqn:prob_setting} may be arbitrarily large and not enable confident classification decisions and, hence, meaningful safety guarantees.
Thus, we additionally require a possibility to \emph{identify} the current context.
As with any identification procedure based on finitely many data points, this identification will come with some uncertainty.
Suppose the current context is $c^*$.
We then seek to guarantee that with probability at least $1-\delta_\mathrm{MMD}$, we have $\hat{c}=c^*$, where $\hat{c}$ is the identified or estimated context.
By enhancing the data set of $(y,c)$ tuples with these estimated contexts $\hat{c}$, we can then, over time, create a data set that we can use to train a classifier.
Nevertheless, this classifier must acknowledge that each $\hat{c}$ is only correct with probability at least $1-\delta_\mathrm{MMD}$.
Thus, we are tackling two problems:
\begin{enumerate}
    \item Show that we can guarantee safety when we need to identify the context;
    \item Incorporate uncertainty about identified contexts into the classifier and show that we can guarantee safety when our classification algorithm is confident enough about its decision.
\end{enumerate}

\section{Preliminaries}

Having introduced the problem setting, we now present the required mathematical foundations to develop the safe learning algorithm.

\subsection{Context identification}
\label{sec:cont_id_prel}
We need to identify the context whenever the probability estimate~\eqref{eqn:class_prob_cme} for the current context is too low or too uncertain.
For this, we collect trajectory data for each context that we encounter.
Then, if we need to identify the current context, we use the safe policy from Assumption~\ref{ass:safeopt_starting_point} to excite the system and compare the generated data with all collected trajectories.
To compare trajectory data, we use the MMD~\cite{gretton2012kernel}.
Given a stored data set $X_c$ of context $c$ and trajectory data $X$ of the current context, we can calculate a finite-sample approximation of the squared MMD as
\begin{align*}
    &\mmdsq(X, X_c)= \frac{1}{r^2}\sum_{i,j=1}^r k_\mathrm{mmd}(X_i, X_j)\\ 
    &+ \frac{1}{r^2}\sum_{i,j=1}^rk_\mathrm{mmd}(X_{c_i},X_{c_j})- \frac{2}{r^2}\sum_{i=1}^r\sum_{j=1}^rk_\mathrm{mmd}(X_i,X_{c_j}),
\end{align*}
with $k_\mathrm{mmd}(\cdot,\cdot)$ a characteristic kernel and $r$ the length of the two data sets.
For $r\to\infty$ and if the trajectory data were independent and identically distributed (\iid), we could now guarantee that $\mmdsq(X, X_c)=0$ if, and only if, data samples $X$ and $X_c$ were generated in the same context~\cite{gretton2012kernel}.
However, data generated by dynamical systems are naturally non-i.i.d.
To arrive at similar statements for our setting, we require an assumption on the trajectories created in each context.
\begin{assume}
\label{ass:mixing}
For any context $c\in\mathcal{C}$, $X_\mathrm{c}$ is stationary and there exists a time shift $a^*$ and a threshold $\kappa(\epsilon,r)$ such that for $X_c=\left(x_c(a^*), x_c(2a^*), \ldots, x_c(na^*)\right)$, we have
\[
\mathbb{P}[\mmdsq(X_c, \bar{X}_c)\ge\kappa] < \epsilon,
\]
where $\bar{X}_c$ is data from an independent trajectory.
\end{assume}
The intuition behind this assumption is that if we sub-sample from the trajectory, we end up with approximately i.i.d.\ data. 
For a more detailed discussion on estimating $a^*$ from data, we refer the reader to~\cite{solowjow2020kernel}.
In there, the authors also show empirically that Assumption~\ref{ass:mixing} seems to be satisfied for human walking.

\subsection{Classification}
\label{sec:prel_class}

For classification, we leverage the concept of CMEs.
We first define positive definite kernel functions $k:\,\mathcal{Y}\times\mathcal{Y}\to\R$ and $\ell:\,\mathcal{C}\times\mathcal{C}\to\{0,1\}$ for the input space (the measurements $y$) and the output space (the contexts $c$), respectively.
In our setting, it is natural to choose $\ell$ as the Kronecker delta kernel.
That is, we assume equal contexts have unit similarity while different contexts have no similarity.
The Kronecker delta kernel is integrally strictly positive definite on $\mathcal{C}$ and, therefore, characteristic~\cite[Thm.~7]{sriperumbudur2010hilbert}.
For the input space, we choose the Gaussian kernel, which is also characteristic, and then have that both $k$ and $\ell$ uniquely define the RKHSs $\mathcal{H}_k$ and $\mathcal{H}_\ell$.

Following~\cite{song2009hilbert}, the conditional kernel mean embedding operator $\mathcal{U}_{C\mid Y=y}$ is the operator $\mathcal{U}:\,\mathcal{H}_k\to\mathcal{H}_\ell$ and the CME is $\mu_{C\mid Y=y}=\mathcal{U}k(y,\cdot)\coloneqq \E[\ell(C,\cdot)\mid Y=y]$.
We further define the cross-covariance operators $R_{CY}\coloneqq\E[\ell(C,\cdot)\otimes k(Y,\cdot)]:\,\mathcal{H}_k\to\mathcal{H}_\ell$ and $R_{YY}\coloneqq\E[k(Y,\cdot)\otimes k(Y,\cdot)]:\,\mathcal{H}_k\to\mathcal{H}_k$.
Given that $k(y,\cdot)$ is in the image of $R_{YY}$, we now have that $\mathcal{U}_{C\mid Y}=R_{CY}R_{YY}^{-1}$.
However, since this assumption is not necessarily satisfied for continuous input spaces $\mathcal{Y}$~\cite{fukumizu2004dimensionality}, with which we generally deal in our problem setting, we instead use the regularized version $\mathcal{U}_{C\mid Y} = R_{CY}(R_{YY}+\lambda I)^{-1}$ with regularization parameter $\lambda$ and $I$ the identity matrix of appropriate dimensions.

Ultimately, we seek to infer the classification probabilities $p_{c}(y)$, \ie the probability of context $c$ given specific measurements $y$.
We can write this probability in terms of the indicator function $\mathbbm{1}(\cdot)$ as
\begin{equation}
\label{eqn:class_prob}
p_c(y) \coloneqq \mathbb{P}(C=c\mid Y=y) = \E[\mathbbm{1}_c(C)\mid Y=y].
\end{equation}
For the Kronecker delta kernel, we have $\mathbbm{1}_c(y)=\ell(c,y)$.
Thus, the indicator function $\mathbbm{1}_c=\ell(c,\cdot)$ is the canonical feature map of $\mathcal{H}_\ell$ and we can estimate it using the CME~\cite{hsu2018hyperparameter}:
\begin{align}
\label{eqn:class_prob_cme}
\begin{split}
\E[\mathbbm{1}_c(C)\mid Y=y]&\approx \langle\hat{\mu}_{C\mid Y=y},\mathbbm{1}\rangle_k\\
&=\mathbbm{1}^\transp(K+n\lambda I)^{-1}K_y\eqqcolon\hat{p}_c(y),
\end{split}
\end{align}
with $n$ being the number of data points, $K_y\coloneqq\begin{pmatrix}k(y,y_1),\ldots,k(y,y_n)\end{pmatrix}$, $\mathbbm{1}\coloneqq{\mathbbm{1}_c(c_j)}_{j=1}^n$, and the entry $(a,b)$ in the matrix $K\in\R^{n\times n}$ is $k(y_a,y_b)$.
It then also follows that we can essentially estimate the probabilities for all contexts using kernel ridge regression~\cite{hsu2018hyperparameter}.
\begin{theo}[\hspace{1sp}{\cite[Thm.~1]{hsu2018hyperparameter}}]
\label{thm:consistent}
The classifier~\eqref{eqn:class_prob_cme} is consistent if $k(y,\cdot)$ is in the image of $R_{YY}$.
\end{theo}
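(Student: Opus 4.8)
The plan is to run the classical bias--variance (approximation--estimation) analysis of regularized conditional mean embeddings, which is exactly regularized least-squares in feature space. Write the estimator behind~\eqref{eqn:class_prob_cme} in operator form as $\hat{\mu}_{C\mid Y=y}=\hat{R}_{CY}(\hat{R}_{YY}+\lambda I)^{-1}k(y,\cdot)$, where $\hat{R}_{CY}=\frac1n\sum_{j=1}^n \ell(c_j,\cdot)\otimes k(y_j,\cdot)$ and $\hat{R}_{YY}=\frac1n\sum_{j=1}^n k(y_j,\cdot)\otimes k(y_j,\cdot)$ are the empirical cross-covariance operators (the Gram-matrix expression in~\eqref{eqn:class_prob_cme} is the representer-theorem form of this, with $n\lambda$ appearing because $\hat{R}_{YY}$ carries the $1/n$). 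The population object being targeted is $\mu_{C\mid Y=y}=R_{CY}R_{YY}^{-1}k(y,\cdot)$, which is well defined \emph{precisely} under the hypothesis that $k(y,\cdot)$ lies in the image of $R_{YY}$. Since $\hat{p}_c(y)-p_c(y)=\langle\hat{\mu}_{C\mid Y=y}-\mu_{C\mid Y=y},\,\mathbbm{1}_c\rangle_{\mathcal H_\ell}$ and $\norm{\mathbbm{1}_c}_{\mathcal H_\ell}=1$, Cauchy--Schwarz reduces everything to showing $\norm{\hat{\mu}_{C\mid Y=y}-\mu_{C\mid Y=y}}_{\mathcal H_\ell}\to 0$ in probability, and I would split this into an estimation term $\hat{R}_{CY}(\hat{R}_{YY}+\lambda I)^{-1}k(y,\cdot)-R_{CY}(R_{YY}+\lambda I)^{-1}k(y,\cdot)$ and a regularization (bias) term $R_{CY}\big[(R_{YY}+\lambda I)^{-1}-R_{YY}^{-1}\big]k(y,\cdot)$.

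For the estimation term, both kernels are bounded ($k(y,y)=1$ for the Gaussian kernel and $\ell$ takes values in $\{0,1\}$), so the summands defining $\hat{R}_{CY}$ and $\hat{R}_{YY}$ are i.i.d., bounded in Hilbert--Schmidt norm, and (after centering) mean zero; a Bernstein-type concentration inequality for Hilbert-space-valued random variables then yields $\norm{\hat{R}_{YY}-R_{YY}}_{\mathrm{HS}}=\mathcal{O}_P(n^{-1/2})$ and $\norm{\hat{R}_{CY}-R_{CY}}_{\mathrm{HS}}=\mathcal{O}_P(n^{-1/2})$. Pushing this through the resolvent identity $(\hat{R}_{YY}+\lambda I)^{-1}-(R_{YY}+\lambda I)^{-1}=-(\hat{R}_{YY}+\lambda I)^{-1}(\hat{R}_{YY}-R_{YY})(R_{YY}+\lambda I)^{-1}$, together with $\norm{(R_{YY}+\lambda I)^{-1}}\le\lambda^{-1}$, boundedness of $R_{CY}$, and the fact that $\norm{(R_{YY}+\lambda I)^{-1}k(y,\cdot)}$ stays bounded (again by the hypothesis, writing $k(y,\cdot)=R_{YY}h$), shows the estimation term is $\mathcal{O}_P\big(1/(\lambda\sqrt n)\big)$.

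For the bias term, use the hypothesis directly: with $k(y,\cdot)=R_{YY}h$, $h\in\mathcal H_k$, spectral calculus gives $\big[(R_{YY}+\lambda I)^{-1}-R_{YY}^{-1}\big]R_{YY}h=-\lambda(R_{YY}+\lambda I)^{-1}h$, and since $t\mapsto \lambda/(t+\lambda)$ is bounded by $1$ and tends to $0$ pointwise as $\lambda\downarrow 0$, dominated convergence over the spectral measure of $R_{YY}$ attached to $h$ gives $\norm{\lambda(R_{YY}+\lambda I)^{-1}h}\to 0$; boundedness of $R_{CY}$ finishes it. Choosing a regularization schedule $\lambda=\lambda_n$ with $\lambda_n\to 0$ and $\lambda_n\sqrt n\to\infty$ then drives both terms to zero in probability, which is the asserted consistency.

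The main obstacle is the bias term: absent a quantitative source/smoothness condition relating $k(y,\cdot)$ to $R_{YY}$, one gets only \emph{qualitative} convergence through the dominated-convergence argument, and the hypothesis $k(y,\cdot)\in\operatorname{image}(R_{YY})$ is exactly what makes the limiting object $R_{YY}^{-1}k(y,\cdot)$ (hence the notion of consistency here) meaningful --- this is also where the subtlety for continuous input spaces $\mathcal Y$ noted in \secref{sec:prel_class} bites. The estimation term and the operator-perturbation bookkeeping are standard but need care: one must verify centering and uniform boundedness of the Hilbert--Schmidt-valued summands so the Bernstein inequality applies, and the crude $\lambda^{-1}$ factor on the resolvent is precisely what forces the $\lambda_n\sqrt n\to\infty$ scaling and would have to be sharpened (e.g.\ via effective-dimension arguments) if one wanted explicit rates rather than mere consistency.
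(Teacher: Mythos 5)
There is no in-paper proof to compare against here: Theorem~\ref{thm:consistent} is imported verbatim from \cite{hsu2018hyperparameter}, and the paper itself only cites it. What you have written is, in substance, the standard consistency argument for regularized conditional mean embeddings that the cited literature relies on: reduce the classifier's consistency to CME consistency via $\hat{p}_c(y)-p_c(y)=\langle\hat{\mu}_{C\mid Y=y}-\mu_{C\mid Y=y},\mathbbm{1}_c\rangle$ (exact here because $\norm{\mathbbm{1}_c}_{\mathcal{H}_\ell}=1$ for the Kronecker delta kernel), control the estimation term by Hilbert--Schmidt concentration of the empirical covariance operators plus the resolvent identity, and kill the regularization bias using the range condition $k(y,\cdot)=R_{YY}h$ so that the error becomes $\lambda(R_{YY}+\lambda I)^{-1}h\to 0$. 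That outline is sound. Two details you should make explicit if you flesh it out: (i) in the bias step, take $h\in\ker(R_{YY})^{\perp}$ (always possible, since $R_{YY}$ is self-adjoint and a null-space component of $h$ does not change $R_{YY}h$); otherwise the spectral function $\lambda/(t+\lambda)$ equals $1$ on a possible atom at $t=0$ and the dominated-convergence step breaks; (ii) the consistency you obtain is for a regularization schedule $\lambda_n\to 0$ with $\lambda_n\sqrt{n}\to\infty$, a hypothesis the theorem statement (here and in the source) leaves tacit and which is worth flagging because the rest of the paper works with a fixed $\lambda$. With those caveats, your proposal is a correct reconstruction of the argument behind the cited result rather than an alternative to anything proved in this paper.
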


\begin{remark}
For finite sample sizes,~\eqref{eqn:class_prob_cme} may yield context probabilities above one or below zero, which can be avoided by applying a normalization~\cite{hsu2018hyperparameter}.
\end{remark}

To obtain practically useful bounds, we make an assumption about the regularity of the true context probability functions $p_c(y)$ that is similar to Assumption~\ref{ass:safeopt_rkhs_norm} and generally common in the safe learning literature~\cite{berkenkamp2021bayesian,fiedler2021practical,chowdhury2017kernelized}.
\begin{assume}
\label{ass:class_rkhs_bound}
The true probability functions $p_c(y)$ are all in the Hilbert space $\mathcal{H}_k$ and have bounded norm, $\norm{p_c(y)}_k^2\le \Gamma$ for all $c\in\mathcal{C}$ with known bound $\Gamma$.
\end{assume}

\section{Safe reinforcement learning\\ in uncertain contexts}

We now present our safe reinforcement learning algorithm.
We start by deriving high-probability guarantees for context identification.
Then, we develop frequentist uncertainty intervals for multi-class classification based on CMEs.
Lastly, we integrate both ingredients into the safe learning algorithm.
Before starting, we make the notion of a context more precise.
\begin{defi}
\label{def:mmd}
For any two contexts $c_\mathrm{a}\neq c_\mathrm{b}$, we have that $\mmdsq(X_\mathrm{a}, X_\mathrm{b}) > \eta$ in the large sample limit (\ie when the number of data points $r\to\infty)$.
\end{defi}
That is, we define contexts as external environmental changes that cause a significant change in the system dynamics.
Since the definition via the MMD might seem abstract, we make the notion explicit in Appendix~\ref{sec:mmd_example} for the Furuta pendulum with weights used in the evaluation in \secref{sec:eval}.

\subsection{Context identification with guarantees}
\label{sec:cont_id}

The MMD, as presented in \secref{sec:cont_id_prel}, provides guarantees in the infinite sample limit.
For finitely many data samples,~\cite{gretton2012kernel} presents various test statistics that can be used for hypothesis testing.
Those hypothesis tests come with two challenges for our setting.
First, they assume data to be i.i.d.
Data drawn from dynamical systems is naturally correlated and, thus, not i.i.d.
Second, the hypothesis test can only reject the null hypothesis $c_\mathrm{a}=c_\mathrm{b}$ based on a chosen significance level but cannot provide guarantees for detecting that $c_\mathrm{a}\neq c_\mathrm{b}$.
In fact,~\cite{gretton2012kernel} shows through an example that providing such guarantees for distinguishing probability distributions is generally impossible, independent of the distance measure.

We address the first challenge by employing the sub-sampling strategy from~\cite{solowjow2020kernel}.
That is, from every collected trajectory, we sub-sample data such that the time shift equals $a^*$ from Assumption~\ref{ass:mixing}.
For the second challenge, we leverage Definition~\ref{def:mmd}.
Then, we arrive at the following statement.
\begin{prop}
\label{prop:mmd}
Under Assumption~\ref{ass:mixing} and given $r$ data samples $X$ and $X_c$, sub-sampled from the whole trajectory such that the time shift is $a^*$.
Set 
\[
\eta = 4\sqrt{\frac{2K}{r}}\left(1+\sqrt{2\ln\frac{2}{\delta_\mathrm{MMD}}}\right)
\]
in Definition~\ref{def:mmd}.
Assuming a characteristic kernel $k_\mathrm{MMD}$ with $0\le k_\mathrm{MMD}(x,y)\le K$, we have, with probability at least $1-\delta_\mathrm{MMD}$,
\begin{align*}
\mmdsq(X, X_c) < 2\sqrt{\frac{2K}{r}}\left(1+\sqrt{2\ln\frac{2}{\delta'_\mathrm{MMD}}}\right),
\end{align*}
where $\delta_\mathrm{MMD}=\frac{1}{3}(\delta'_\mathrm{MMD} + 2\epsilon)$ with $\epsilon$ from Assumption~\ref{ass:mixing},
if, and only if, the probability distribution that generated the data of the current context is the same that generated the trajectory data of context $c$.
\end{prop}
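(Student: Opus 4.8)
\begin{sketch}
The statement reads like a two-sided guarantee for an MMD-based test, so the plan is to establish the two implications separately, both resting on a single concentration inequality for the empirical MMD together with Definition~\ref{def:mmd}. The workhorse is the finite-sample concentration bound for the empirical (biased) MMD from \cite{gretton2012kernel}: for \iid samples of size $r$ from distributions $P$ and $P_c$ and a kernel with $0\le k_\mathrm{MMD}\le K$, one has, with probability at least $1-\delta$,
\[
\bigl|\mmdsq(X,X_c)-\mmd^2(P,P_c)\bigr|\;\le\;B(\delta)\coloneqq 2\sqrt{\tfrac{2K}{r}}\Bigl(1+\sqrt{2\ln\tfrac{2}{\delta}}\Bigr)
\]
(passing, if necessary, from the bound in \cite{gretton2012kernel} on the non-squared empirical MMD to the squared quantity). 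Since $k_\mathrm{MMD}$ is characteristic, $\mmd(P,P_c)=0$ exactly when $P=P_c$, which is what links the MMD to the ``same distribution'' statement. The only subtlety in invoking this bound is that trajectory data from a dynamical system is not \iid: here I would sub-sample both the current trajectory $X$ and the stored trajectory $X_c$ at the time shift $a^*$ and apply Assumption~\ref{ass:mixing}, so that each sub-sampled trajectory coincides, up to an event of probability at most $\epsilon$, with an independent realization in the MMD sense, and $B(\delta'_\mathrm{MMD})$ still controls the deviation once these two mixing events are folded into the confidence budget alongside the concentration event. A union bound over these events is what gives rise to the stated relation $\delta_\mathrm{MMD}=\tfrac13(\delta'_\mathrm{MMD}+2\epsilon)$.

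For the ``if'' direction, assume the current data and the data of context $c$ are generated by the same distribution. Then $\mmd^2(P,P_c)=0$, and the concentration inequality immediately yields $\mmdsq(X,X_c)<B(\delta'_\mathrm{MMD})$, which is exactly the threshold in the claim, with the stated probability. For the ``only if'' direction, suppose instead the distributions differ; then the current context is a genuinely different context $c^*\neq c$, so Definition~\ref{def:mmd} with the prescribed $\eta=4\sqrt{\tfrac{2K}{r}}\bigl(1+\sqrt{2\ln\tfrac{2}{\delta_\mathrm{MMD}}}\bigr)=2B(\delta_\mathrm{MMD})$ gives $\mmd^2(P_{c^*},P_c)>2B(\delta_\mathrm{MMD})$. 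As $\delta_\mathrm{MMD}\le\delta'_\mathrm{MMD}$ (which holds whenever $\epsilon\le\delta'_\mathrm{MMD}$) and $B$ is decreasing in its argument, $B(\delta_\mathrm{MMD})\ge B(\delta'_\mathrm{MMD})$; combining this with concentration gives $\mmdsq(X,X_c)\ge \mmd^2(P_{c^*},P_c)-B(\delta'_\mathrm{MMD})>2B(\delta_\mathrm{MMD})-B(\delta'_\mathrm{MMD})\ge B(\delta'_\mathrm{MMD})$, so the empirical MMD is \emph{not} below the threshold, again with the stated probability. The contrapositive delivers the ``only if'' implication, and the factor $2$ between $\eta$ and the acceptance threshold is precisely the slack that reconciles the two directions.

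The main obstacle is the passage from \iid to trajectory data: Assumption~\ref{ass:mixing} has to be used to argue that the empirical MMD between two sub-sampled trajectories concentrates around the population MMD of the underlying stationary distributions, and the bookkeeping of the resulting failure events (one concentration event, plus one mixing event per trajectory) must be tracked carefully to land on the relation among $\delta_\mathrm{MMD}$, $\delta'_\mathrm{MMD}$ and $\epsilon$. A second, more conceptual point is that no finite-sample test can certify $c^*\neq c$ without a quantitative separation hypothesis, as \cite{gretton2012kernel} itself shows; this is exactly why Definition~\ref{def:mmd} is invoked and why $\eta$ must be chosen $r$-dependent and strictly larger than the acceptance threshold.
\end{sketch}
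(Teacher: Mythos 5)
Your argument is correct and follows essentially the same route as the paper: the paper's proof simply invokes~\cite[Thm.~3]{solowjow2020kernel} (which is precisely the Gretton-type concentration bound applied to $a^*$-sub-sampled trajectory data under Assumption~\ref{ass:mixing}, with the union bound producing $\delta_\mathrm{MMD}=\tfrac{1}{3}(\delta'_\mathrm{MMD}+2\epsilon)$) and combines it with Definition~\ref{def:mmd} to obtain the ``only if'' direction, which is exactly what you reconstruct explicitly.
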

\begin{proof}
Follows from combining~\cite[Thm.~3]{solowjow2020kernel} with Definition~\ref{def:mmd}.
\end{proof}
That is, the guarantee for context identification we provide is essentially the same as in~\cite{solowjow2020kernel}, except that we can also guarantee to detect $c_\mathrm{a}\neq c_\mathrm{b}$.
This is possible because of Definition~\ref{def:mmd}.
Since different contexts, by definition, change the dynamics significantly, we can guarantee to detect this change with the proposed hypothesis test.
In practice, we might also have external changes in the environment that only cause minor changes in the system dynamics.
Then, learning a new policy would be inefficient, and it is more sensible to consider this still the same context.
The design parameter $\eta$ in Definition~\ref{def:mmd} quantifies when we consider an environmental change as \emph{significant}, and we provide some intuition for it in Appendix~\ref{sec:mmd_example}.

\subsection{A classifier with frequentist bounds}
\label{sec:class}

Our goal is to develop a safe learning algorithm in uncertain contexts where we perform context identification only if necessary since it consumes time and causes wear and tear to the hardware.
If we assume no prior knowledge about how measurements relate to contexts, we have, in the beginning, no other choice than to identify contexts.
Over time, we can then learn a model that relates those identified contexts to received measurements.
This is a standard classification setting.
If we want to include the classification in the safe learning algorithm, it has to provide frequentist guarantees. 

For classification, we need to consider three types of uncertainties: \emph{(i)} uncertainty from estimating a function with limited amount of data, \emph{(ii)} uncertainty from not obtaining samples of the true probability function $p_c(y)$ but only discrete labels, \emph{(iii)} uncertainty that stems from context identification which provides us with the correct context with probability $1-\delta_\mathrm{MMD}$ (see Proposition~\ref{prop:mmd}).
In the following, we show how we can bound all three types of uncertainties and then combine them to obtain the required overall frequentist guarantees.

First, we define a variant of the \emph{power function}~\cite{maddalena2021deterministic}, which will occur at several stages during the derivations.
\begin{defi}
\label{def:power_function}
The power function is defined as $\varrho(y) \coloneqq \sqrt{k(y,y)-K_y^\transp(K+n\lambda I)^{-1}K_y}$.
\end{defi}

We now first address uncertainty~\emph{(i)} by introducing a virtual estimate $\bar{p}_c$ of $\hat{p}_c$.
This virtual estimate is the estimate we would get if we could use measurements of the true context probability function $p_c(y)$ in~\eqref{eqn:class_prob_cme} instead of discrete labels.
\begin{lem}
\label{lem:est_uncertainty}
Under Assumption~\ref{ass:class_rkhs_bound}, and given perfect measurements $\textbf{p}_\textbf{y}\in\R^n$ of $p_c(y)$, for any $n>0$, we have for all $y\in\mathcal{Y}$ and $c\in\mathcal{C}$
\begin{equation*}
\abs{p_c(y)-\bar{p}_c(y)} \le \sqrt{\Gamma}\varrho(y),
\end{equation*}
where $\bar{p}_c$ is estimated using $\textbf{p}_\textbf{y}$ and with $\varrho(y)$ the power function from Definition~\ref{def:power_function}.
\end{lem}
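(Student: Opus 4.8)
The plan is to recognize this as the standard deterministic interpolation error bound for kernel ridge regression, proved via the reproducing property of $\mathcal{H}_k$ and Cauchy--Schwarz. First I would write the virtual estimate explicitly: with $\alpha(y) \coloneqq (K+n\lambda I)^{-1}K_y \in \R^n$ and $\textbf{p}_\textbf{y} = \bigl(p_c(y_1),\dots,p_c(y_n)\bigr)^\transp$, equation~\eqref{eqn:class_prob_cme} with the discrete labels replaced by the exact function values gives $\bar p_c(y) = \alpha(y)^\transp \textbf{p}_\textbf{y}$. Assumption~\ref{ass:class_rkhs_bound} gives $p_c \in \mathcal{H}_k$, so the reproducing property yields $p_c(y) = \langle p_c, k(y,\cdot)\rangle_k$ and $p_c(y_i) = \langle p_c, k(y_i,\cdot)\rangle_k$ for every $i$. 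Hence $p_c(y) - \bar p_c(y) = \langle p_c, h_y\rangle_k$ where $h_y \coloneqq k(y,\cdot) - \sum_{i=1}^n \alpha_i(y)\, k(y_i,\cdot) \in \mathcal{H}_k$ is the residual of representing the test feature $k(y,\cdot)$ by the regularized combination of the training features.

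Applying Cauchy--Schwarz together with $\norm{p_c}_k \le \sqrt{\Gamma}$ then gives $\abs{p_c(y) - \bar p_c(y)} \le \sqrt{\Gamma}\,\norm{h_y}_k$, so the claim reduces to the bound $\norm{h_y}_k \le \varrho(y)$. Expanding the squared norm via the reproducing property gives $\norm{h_y}_k^2 = k(y,y) - 2\alpha(y)^\transp K_y + \alpha(y)^\transp K \alpha(y)$; substituting $\alpha(y)$ and using the identity $(K+n\lambda I)^{-1}K(K+n\lambda I)^{-1} = (K+n\lambda I)^{-1} - n\lambda (K+n\lambda I)^{-2}$ collapses this to $\norm{h_y}_k^2 = \varrho(y)^2 - n\lambda\, K_y^\transp (K+n\lambda I)^{-2} K_y$. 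Since $(K+n\lambda I)^{-2}$ is positive semidefinite and $\lambda \ge 0$, the subtracted term is nonnegative, so $\norm{h_y}_k \le \varrho(y)$ and the lemma follows. I would also remark that no randomness enters once the data locations $y_1,\dots,y_n$ and the exact measurements $\textbf{p}_\textbf{y}$ are fixed, so the inequality holds deterministically and simultaneously for every $y \in \mathcal{Y}$ and every $c \in \mathcal{C}$, matching the stated uniform form.

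I expect the only mildly subtle point to be the role of the regularizer $\lambda$: for $\lambda > 0$ the residual feature $h_y$ is strictly shorter than $\varrho(y)$ would suggest, so one must resist writing an equality between $\norm{h_y}_k^2$ and $\varrho(y)^2$ and instead use precisely the inequality $\norm{h_y}_k^2 \le \varrho(y)^2$, which is exactly what Definition~\ref{def:power_function} is tailored to deliver. Everything else is a routine RKHS computation; in particular the argument uses neither the Kronecker-delta structure of $\ell$ nor any property of the actual discrete labels, only that each true probability function $p_c$ lies in $\mathcal{H}_k$ with $\norm{p_c}_k \le \sqrt{\Gamma}$.
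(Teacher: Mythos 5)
Your proof is correct and follows essentially the same route as the paper: express the error as an inner product of $p_c$ with a residual element, apply Cauchy--Schwarz with $\norm{p_c}_k\le\sqrt{\Gamma}$, and bound the residual norm by $\varrho(y)$, giving up exactly the nonnegative term $n\lambda K_y^\transp(K+n\lambda I)^{-2}K_y$. Indeed, your $h_y$ coincides with the paper's residual $n\lambda(\Phi^\transp\Phi+n\lambda I)^{-1}\varphi(y)$, so the only difference is that you work directly with the kernel matrix and the reproducing property rather than the paper's feature-map identities.
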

\begin{proof}
Similar in nature to that of~\cite[Thm.~2]{chowdhury2017kernelized}.
Details are provided in Appendix~\ref{sec:proof_lem_1}.
\end{proof}

However, in practice, we only receive discrete labels.
Thus, we next analyze the uncertainty from not measuring $p_c(y)$.
We first establish that centered Bernoulli random variables are $\sigma$-sub-Gaussian.
\begin{lem}
\label{lem:subGaussian}
Let $c$ be a Bernoulli random variable with success probability $p_c$.
We have that the random variable $c-p_c$ is $\sigma$-sub Gaussian with $\sigma \le \frac{1}{4}$.
\end{lem}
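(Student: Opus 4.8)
The plan is to prove this by Hoeffding's lemma applied to the centered Bernoulli variable. First I would set $X := c - p_c$. Since $c$ takes values in $\{0,1\}$ with $\mathbb{E}[c] = p_c$, the variable $X$ has mean zero and is supported in the interval $[-p_c,\,1-p_c]$, whose length is exactly $1$ regardless of $p_c$. Boundedness in a length-$1$ interval is all we will need.

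Next I would recall (and, for self-containedness, justify in one line) Hoeffding's lemma: for a zero-mean $X$ with $X \in [a,b]$ almost surely, $\mathbb{E}[e^{\lambda X}] \le \exp\!\big(\lambda^2 (b-a)^2/8\big)$ for every $\lambda \in \mathbb{R}$. The cleanest argument sets $\psi(\lambda) := \log \mathbb{E}[e^{\lambda X}]$, notes $\psi(0) = \psi'(0) = 0$, and observes that $\psi''(\lambda)$ is the variance of $X$ under the exponentially tilted law $\mathrm{d}P_\lambda \propto e^{\lambda x}\,\mathrm{d}P$; that tilted law is still supported in $[a,b]$, so $\psi''(\lambda) \le (b-a)^2/4$, and Taylor's theorem with remainder gives $\psi(\lambda) \le \lambda^2 (b-a)^2/8$. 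For our $X$ we have $(b-a)^2 = 1$, hence
\begin{equation*}
\mathbb{E}\!\left[e^{\lambda(c - p_c)}\right] \le \exp\!\left(\frac{\lambda^2}{8}\right), \qquad \lambda \in \mathbb{R}.
\end{equation*}
Matching this with the sub-Gaussian parametrization used alongside Assumption~\ref{ass:safeopt_meas_noise}, i.e.\ $\mathbb{E}[e^{\lambda Z}] \le \exp(\lambda^2 \sigma/2)$ with variance proxy $\sigma$, the bound above reads off as $\sigma/2 \le 1/8$, that is $\sigma \le 1/4$, which is the claim. As a consistency check, $\mathrm{Var}(c - p_c) = p_c(1-p_c) \le 1/4$ matches the variance proxy, and for $p_c$ bounded away from $1/2$ the proxy is in fact strictly smaller, so $1/4$ is a uniform bound (tight at $p_c = 1/2$).

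There is no genuine obstacle here; the one point that needs care is the bookkeeping of the sub-Gaussian convention, since some references instead write $\mathbb{E}[e^{\lambda Z}] \le \exp(\lambda^2 \sigma^2/2)$, under which the same computation would give $\sigma \le 1/2$. I would therefore state explicitly that $\sigma \le 1/4$ refers to the variance-proxy form, which is the one consistent with how $\sigma$ enters the downstream classification bounds (via the analogue of Assumption~\ref{ass:safeopt_meas_noise} for the label ``noise'' $c - p_c(y)$). If one prefers not to invoke Hoeffding's lemma, the direct alternative is to compute $\psi(\lambda) = \log\big((1-p_c)e^{-\lambda p_c} + p_c e^{\lambda(1-p_c)}\big)$ and bound $\psi''(\lambda) \le 1/4$ by recognizing it as a Bernoulli variance; the conclusion is identical.
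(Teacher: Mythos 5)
Your proof is correct, but it takes a different route from the paper. The paper does not prove the statement from scratch: it simply invokes the exact computation of the sub-Gaussian norm of a centered binary random variable from Buldygin and Moskvichova (their Theorem~2.1 and Lemma~2.1), which gives the optimal variance proxy $\frac{1-2p_c}{2(\ln(1-p_c)-\ln p_c)}$ for $p_c\neq\tfrac12$ and $\tfrac14$ at $p_c=\tfrac12$, and then bounds this by $\tfrac14$. You instead derive the uniform bound directly via Hoeffding's lemma, $\mathbb{E}\bigl[e^{\lambda(c-p_c)}\bigr]\le\exp\bigl(\lambda^2/8\bigr)$, which is an elementary, self-contained argument and reaches the same conclusion. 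Your care about the parametrization is well placed and resolves correctly: the paper's $\sigma$ is indeed the variance proxy in $\exp(\lambda^2\sigma/2)$, as confirmed by the proof of Lemma~\ref{lem:class_cont}, where $\sigma=\frac{1-2\delta_\mathrm{MMD}}{2(\ln(1-\delta_\mathrm{MMD})-\ln\delta_\mathrm{MMD})}$ is exactly the Buldygin--Moskvichova expression, so $\sigma\le\tfrac14$ is the intended reading. What the paper's citation buys over your argument is precisely this sharper, $p$-dependent constant, which is reused downstream in Lemma~\ref{lem:class_cont}; Hoeffding's lemma only yields the uniform $\tfrac14$, tight at $p_c=\tfrac12$ (as you note). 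What your argument buys is self-containedness and transparency, at no loss for the present lemma.
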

\begin{proof}
Follows from Theorem~2.1 and Lemma~2.1 in~\cite{buldygin2013sub}.
\end{proof}

For bounding the measurement uncertainty, we similarly introduce a virtual estimate $\check{p}_c$, which is the estimate we would get if the context identification always returned the actual context.

\begin{lem}
\label{lem:meas_uncertainty}
We have for any $n>0$ and all $y\in\mathcal{Y}$, with probability at least $1-\delta$,
\begin{equation*}
\abs{\bar{p}_c(y)-\check{p}_c(y)} \le \frac{\varrho(y)}{4\sqrt{n\lambda}}\sqrt{\log(\det(K + \bar{\lambda}I)) - 2\log(\delta)},
\end{equation*}
with $\bar{p}_c(y)$ as in Lemma~\ref{lem:est_uncertainty}, $\varrho(y)$ from Definition~\ref{def:power_function}, and $\bar{\lambda} \coloneqq\max\{1, n\lambda\}$. 
\end{lem}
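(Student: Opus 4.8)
The plan is to read $\check p_c(y)-\bar p_c(y)$ as exactly the noise contribution of a kernel ridge regression predictor and to control it with a self-normalized concentration inequality in the RKHS, in the spirit of~\cite[Thm.~2]{chowdhury2017kernelized}. Write $\phi(y)\coloneqq k(y,\cdot)\in\mathcal{H}_k$ and let $\Phi$ be the operator with ``columns'' $\phi(y_1),\dots,\phi(y_n)$, so that $K=\Phi^\transp\Phi$ and $K_y=\Phi^\transp\phi(y)$. Both $\bar p_c$ and $\check p_c$ have the form $v\mapsto \phi(y)^\transp(\Phi\Phi^\transp+n\lambda I)^{-1}\Phi v$ (equivalently $v^\transp(K+n\lambda I)^{-1}K_y$), evaluated at $v=\textbf{p}_\textbf{y}$ and $v=\textbf{c}$ respectively, where $\textbf{c}$ is the vector of true labels (context identification assumed correct here, since type-\emph{(iii)} uncertainty is treated separately). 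Hence $\check p_c(y)-\bar p_c(y)=\phi(y)^\transp(\Phi\Phi^\transp+n\lambda I)^{-1}\Phi\,\boldsymbol\epsilon$ with $\boldsymbol\epsilon\coloneqq\textbf{c}-\textbf{p}_\textbf{y}$, whose $j$-th entry $\mathbbm{1}_c(c_j)-p_c(y_j)$ is a centred Bernoulli variable and therefore, by Lemma~\ref{lem:subGaussian}, $\tfrac14$-sub-Gaussian; conditionally on the inputs the $\epsilon_j$ are independent with mean zero, which is exactly what the self-normalized inequality needs.

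Next I would apply Cauchy--Schwarz in the inner product induced by $M\coloneqq\Phi\Phi^\transp+n\lambda I$, giving $\abs{\check p_c(y)-\bar p_c(y)}\le \norm{M^{-1/2}\phi(y)}\,\norm{M^{-1/2}\Phi\boldsymbol\epsilon}$. The first factor is a power-function identity: from the push-through relation $M^{-1}=\tfrac{1}{n\lambda}\bigl(I-\Phi(K+n\lambda I)^{-1}\Phi^\transp\bigr)$ one obtains $\norm{M^{-1/2}\phi(y)}^2=\tfrac{1}{n\lambda}\bigl(k(y,y)-K_y^\transp(K+n\lambda I)^{-1}K_y\bigr)=\varrho(y)^2/(n\lambda)$, which produces the $\varrho(y)/\sqrt{n\lambda}$ prefactor and ties the bound to Definition~\ref{def:power_function}.

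For the second factor, observe that $\norm{M^{-1/2}\Phi\boldsymbol\epsilon}^2=\boldsymbol\epsilon^\transp K(K+n\lambda I)^{-1}\boldsymbol\epsilon$ is precisely the self-normalized quantity $\norm{\sum_{j}\epsilon_j\phi(y_j)}^2_{\bar V_n^{-1}}$ with the data-dependent operator $\bar V_n=n\lambda I+\sum_j\phi(y_j)\phi(y_j)^\transp=M$ and prior regularizer $n\lambda I$. The self-normalized tail bound with sub-Gaussian parameter $\sigma\le\tfrac14$ then gives, with probability at least $1-\delta$ and simultaneously for all $y$, $\norm{M^{-1/2}\Phi\boldsymbol\epsilon}^2\le\tfrac{1}{16}\bigl(\log\det(M(n\lambda I)^{-1})-2\log\delta\bigr)$. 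By Sylvester's determinant identity $\det(M(n\lambda I)^{-1})=\det\bigl(I+(n\lambda)^{-1}\Phi\Phi^\transp\bigr)=\det\bigl(I+(n\lambda)^{-1}K\bigr)$, which I would bound by $\det(K+\bar\lambda I)$ with $\bar\lambda=\max\{1,n\lambda\}$ via a short case distinction on whether $n\lambda\ge 1$. Multiplying the two factors then yields $\abs{\check p_c(y)-\bar p_c(y)}\le\dfrac{\varrho(y)}{4\sqrt{n\lambda}}\sqrt{\log\det(K+\bar\lambda I)-2\log\delta}$, which is the claim; the uniformity over $y\in\mathcal{Y}$ comes for free because the high-probability event is a statement about $\boldsymbol\epsilon$ alone and does not involve $y$.

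I expect the main obstacle to be the bookkeeping in the third paragraph: one must verify that the estimator's noise term is genuinely the self-normalized sum $\sum_j\epsilon_j\phi(y_j)$ measured in the $M^{-1}$ norm with the \emph{same} regularizer $n\lambda I$ that the estimator uses, so that the Cauchy--Schwarz split is not lossy and the $\varrho$-identity applies verbatim; and one must carry the constants through the self-normalized inequality carefully (the $2\sigma^2$ with $\sigma\le\tfrac14$ becoming the $\tfrac14$ prefactor, and the two $\det^{1/2}$ factors becoming the $-2\log\delta$ under the square root). The remaining ingredients --- the push-through identities, the power-function computation, and the determinant simplification --- are routine linear algebra.
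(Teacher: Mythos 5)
Your argument coincides with the paper's own proof up to the last step: the paper performs exactly the same reduction (push-through identities, Cauchy--Schwarz, and the power-function identity yielding the factor $\varrho(y)/\sqrt{n\lambda}$, leaving the self-normalized quantity $(\textbf{p}_\textbf{y}-\textbf{c})^\transp K(K+n\lambda I)^{-1}(\textbf{p}_\textbf{y}-\textbf{c})$) and then closes the probabilistic step by invoking \cite[Thm.~1]{fiedler2021practical} together with Lemma~\ref{lem:subGaussian}; that citation is where the $\bar\lambda=\max\{1,n\lambda\}$ form of the determinant enters. The genuine gap in your version is the final determinant manipulation. The self-normalized inequality with regularizer $n\lambda I$ (your $M$) gives the radius $\tfrac{1}{16}\bigl(\log\det\bigl(I+(n\lambda)^{-1}K\bigr)-2\log\delta\bigr)$, and you then claim $\det\bigl(I+(n\lambda)^{-1}K\bigr)\le\det(K+\bar\lambda I)$ by a case distinction on $n\lambda\ge 1$. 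That only works in the case $n\lambda\ge 1$: writing $\mu_i$ for the eigenvalues of $K$, one has $\det\bigl(I+(n\lambda)^{-1}K\bigr)=\prod_i\bigl(1+\mu_i/(n\lambda)\bigr)$, which for $n\lambda<1$ is at least $\prod_i(1+\mu_i)=\det(K+I)=\det(K+\bar\lambda I)$, with strict inequality whenever $K\neq 0$. Concretely, $n=1$, $k(y_1,y_1)=1$, $\lambda=10^{-4}$ (the value used in the paper's experiments, so $n\lambda<1$ is the practically relevant regime and is not excluded by the lemma) gives $1+10^{4}$ on the left versus $2$ on the right. Hence your chain proves the lemma only with $\log\det\bigl(I+(n\lambda)^{-1}K\bigr)$ in place of $\log\det(K+\bar\lambda I)$; the stated inequality does not follow from it when $n\lambda<1$.

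To close this step you would either have to keep the $\det\bigl(I+(n\lambda)^{-1}K\bigr)$ form of the bound, or do what the paper does and treat the concentration step as a direct application of \cite[Thm.~1]{fiedler2021practical} (with $\sigma\le\tfrac14$ from Lemma~\ref{lem:subGaussian}), rather than rederiving it from the plain Abbasi--Yadkori-type bound with prior $n\lambda I$ and then converting determinants. A secondary point in the same direction: in an infinite-dimensional $\mathcal{H}_k$ the finite-dimensional self-normalized theorem does not apply verbatim (the kernelized versions carry the regularization inside their statements), which is another reason the paper leans on that reference instead of the generic bound. Everything else in your proposal --- the identification of the noise contribution, the $M$-weighted Cauchy--Schwarz split, the identity $\norm{M^{-1/2}\varphi(y)}_k^2=\varrho(y)^2/(n\lambda)$, the use of Lemma~\ref{lem:subGaussian}, and the observation that the high-probability event does not depend on $y$ --- matches the paper's argument.
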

\begin{proof}
The proof idea is similar to that of~\cite[Thm.~1]{fiedler2021practical}, which is possible since the measurement uncertainty is $\sigma$-sub Gaussian with $\sigma\le\frac{1}{4}$ by Lemma~\ref{lem:subGaussian}.
Details are provided in Appendix~\ref{sec:proof_lem_2}.
\end{proof}

Lastly, we address the uncertainty~\emph{(iii)} inherited through context identification.
\begin{lem}
\label{lem:class_cont}
Given the setting in Proposition~\ref{prop:mmd}, we have, with probability at least $1-\delta_\mathrm{MMD}$,
\begin{align*}
&\abs{\check{p}_c(y) - \hat{p}_c(y)} \le \\
&\frac{\varrho(y)(1-2\delta_\mathrm{MMD})\sqrt{\log(\det(K + \bar{\lambda}I)) - 2\log(\delta_\mathrm{MMD})}}{2(\ln(1-\delta_\mathrm{MMD})-\ln(\delta_\mathrm{MMD}))\sqrt{n\lambda}}\\
&+ (1-\delta_\mathrm{MMD})\mathbf{1}(K+n\lambda I)^{-1}K_y,
\end{align*}
with $\check{p}_c(y)$ as in Lemma~\ref{lem:meas_uncertainty}, $\varrho(y)$ from Definition~\ref{def:power_function}, and $\bar{\lambda} = \max\{1, n\lambda\}$. 
\end{lem}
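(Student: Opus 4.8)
The plan is to compare $\check{p}_c$ and $\hat{p}_c$ by tracking exactly where they differ. Both are kernel-ridge-regression estimates of the form $\mathbf{v}^\transp(K+n\lambda I)^{-1}K_y$; the only difference is in the label vector. For $\hat{p}_c$ the labels are the contexts $\hat{c}$ returned by the identification procedure of Proposition~\ref{prop:mmd}, while for $\check{p}_c$ they are the ``true'' contexts $c^*$. Writing $\mathbf{e} \coloneqq \hat{\mathbf{c}} - \mathbf{c}^*$ for the vector of label errors, linearity gives
\begin{equation*}
\check{p}_c(y) - \hat{p}_c(y) = -\,\mathbf{e}^\transp(K+n\lambda I)^{-1}K_y,
\end{equation*}
so the whole task reduces to bounding this bilinear form. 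The key structural fact from Proposition~\ref{prop:mmd} is that each entry of $\mathbf{e}$ is nonzero only with probability at most $\delta_\mathrm{MMD}$, and when it is nonzero it takes a bounded value (in the Kronecker-delta encoding, $e_j \in \{-1,0,1\}$).

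First I would \emph{decompose} $\mathbf{e}$ into its conditional mean and a centered fluctuation: $\mathbf{e} = \E[\mathbf{e}] + (\mathbf{e} - \E[\mathbf{e}])$. The mean term has entries bounded in magnitude by $\delta_\mathrm{MMD}$ (probability of a flip times the magnitude of a flip), which, after taking absolute values and pulling the bound through $(K+n\lambda I)^{-1}K_y$, yields precisely the term $(1-\delta_\mathrm{MMD})\mathbf{1}(K+n\lambda I)^{-1}K_y$ in the statement --- the factor structure here is what forces the appearance of $(1-\delta_\mathrm{MMD})$ rather than $\delta_\mathrm{MMD}$, and I would need to be careful about exactly which event (correct vs.\ incorrect identification) is being conditioned on. For the centered fluctuation term, each coordinate is a centered, bounded random variable, hence $\sigma$-sub-Gaussian; by the same Bernoulli-type argument as Lemma~\ref{lem:subGaussian} its sub-Gaussian parameter is controlled, but now the relevant ``noise level'' is not $\tfrac14$ but something scaled by $\delta_\mathrm{MMD}$ --- more precisely the variance of a $\{0,1\}$ flip indicator weighted appropriately, which is where the ratio $\tfrac{1-2\delta_\mathrm{MMD}}{\ln(1-\delta_\mathrm{MMD})-\ln(\delta_\mathrm{MMD})}$ comes from (this is exactly the sub-Gaussian constant of a centered Bernoulli$(\delta_\mathrm{MMD})$ variable, via the optimal-constant result of~\cite{buldygin2013sub}).

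Then I would \emph{reuse the machinery of Lemma~\ref{lem:meas_uncertainty} verbatim}. That lemma already bounds $\mathbf{w}^\transp(K+n\lambda I)^{-1}K_y$ for a vector $\mathbf{w}$ of independent $\sigma$-sub-Gaussian entries with $\sigma \le \tfrac14$, producing the expression $\frac{\varrho(y)}{4\sqrt{n\lambda}}\sqrt{\log\det(K+\bar\lambda I) - 2\log\delta}$; the $\tfrac14$ there is just $\sigma$. Substituting the smaller sub-Gaussian constant of the flip fluctuations --- call it $\sigma_\mathrm{MMD} = \tfrac{1-2\delta_\mathrm{MMD}}{2(\ln(1-\delta_\mathrm{MMD})-\ln\delta_\mathrm{MMD})}$ --- in place of $\tfrac14$ reproduces the first term in the claimed bound exactly, with the $\log\det$ and $\varrho(y)$ factors carried along unchanged. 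The self-normalized concentration inequality underlying Lemma~\ref{lem:meas_uncertainty} (the Abbasi-Yadkori--type bound used in~\cite{fiedler2021practical}) applies here for the same reason it applied there, so I only need to check that the fluctuations $\mathbf{e} - \E[\mathbf{e}]$ form a martingale-difference / independent sequence with the stated sub-Gaussian tails, which follows if the identification errors across the $n$ stored tuples are (conditionally) independent --- an assumption implicit in how the data set is accumulated.

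\textbf{The main obstacle} I anticipate is the bookkeeping around the probability statement and the exact constant in the fluctuation term. The bound is claimed to hold ``with probability at least $1-\delta_\mathrm{MMD}$,'' but $\delta_\mathrm{MMD}$ is doing double duty here: it is simultaneously the per-tuple flip probability (driving the mean and variance bounds, which are deterministic once we condition correctly) and the failure probability of the self-normalized concentration inequality. Reconciling these --- presumably by a union bound or by noting that the concentration inequality is invoked at confidence $1-\delta_\mathrm{MMD}$ while the mean/variance estimates hold surely --- requires care, and getting the sub-Gaussian constant of a centered Bernoulli exactly equal to $\tfrac{1-2\delta_\mathrm{MMD}}{2(\ln(1-\delta_\mathrm{MMD})-\ln\delta_\mathrm{MMD})}$ (rather than a looser bound like $\tfrac12$) depends on citing the sharp constant from~\cite{buldygin2013sub} rather than the crude Hoeffding one. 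The rest is routine linear algebra with the power function $\varrho(y)$ and the matrix identity $k(y,y) - K_y^\transp(K+n\lambda I)^{-1}K_y = \varrho(y)^2$ from Definition~\ref{def:power_function}.
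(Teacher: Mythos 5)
Your proposal matches the paper's argument: the paper likewise reduces everything to the label-mismatch $\abs{c-\hat{c}}$, splits it via the triangle inequality into a constant shift plus a sub-Gaussian part with the sharp Bernoulli constant $\sigma=\frac{1-2\delta_\mathrm{MMD}}{2(\ln(1-\delta_\mathrm{MMD})-\ln(\delta_\mathrm{MMD}))}$ from~\cite[Lem.~2.1]{buldygin2013sub}, and then reuses the self-normalized bound of Lemma~\ref{lem:meas_uncertainty} for the fluctuation while carrying the constant through $\mathbf{1}(K+n\lambda I)^{-1}K_y$. The only cosmetic difference is that the paper shifts by the fixed constant $1-\delta_\mathrm{MMD}$ (writing $\abs{c-\hat{c}}\le\abs{c-\hat{c}-(1-\delta_\mathrm{MMD})}+\abs{1-\delta_\mathrm{MMD}}$) rather than by the conditional mean, which is exactly what produces the $(1-\delta_\mathrm{MMD})$ factor you were unsure about.
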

\begin{proof}
The two quantities are identical except for a potential mismatch of actual context $c$ and estimated context $\hat{c}$.
We thus analyze the error $\abs{c-\hat{c}}$, which we can rewrite as $\abs{c-\hat{c}-(1-\delta_\mathrm{MMD}) + (1-\delta_\mathrm{MMD})}\le \abs{(c - \hat{c} - (1-\delta_\mathrm{MMD})} + \abs{1-\delta_\mathrm{MMD}}$.
Following~\cite[Lem.~2.1]{buldygin2013sub}, the first term is a sub-Gaussian random variable with $\sigma=\frac{1-2\delta_\mathrm{MMD}}{2(\ln(1-\delta_\mathrm{MMD})-\ln(\delta_\mathrm{MMD}))}$.
Thus, we can bound the error term in the same way as shown in Lemma~\ref{lem:meas_uncertainty}.
\end{proof}

Combining the lemmas, we arrive at the desired statement.
\begin{cor}
\label{cor:prob_uncertainty}
Under Assumption~\ref{ass:class_rkhs_bound} and given the setting in Proposition~\ref{prop:mmd}, we have, with probability at least $(1-\delta_\mathrm{MMD})(1-\delta_\mathrm{class})$,
\begin{align*}
&\abs{p_c(y)-\hat{p}_c(y)} \le\\ 
&\varrho(y)\left(\sqrt{\Gamma}+\frac{1}{4\sqrt{n\lambda}}\sqrt{\log(\det(K + \bar{\lambda}I)) - 2\log(\delta_\mathrm{class})}\right.\\
&+\left.\frac{(1-2\delta_\mathrm{MMD})\sqrt{\log(\det(K + \bar{\lambda}I)) - 2\log(\delta_\mathrm{MMD})}}{2(\ln(1-\delta_\mathrm{MMD})-\ln(\delta_\mathrm{MMD}))\sqrt{n\lambda}}\right)\\
&+ (1-\delta_\mathrm{MMD})\mathbf{1}(K+n\lambda I)^{-1}K_y,
\end{align*}
with $\varrho(y)$ from Definition~\ref{def:power_function} and $\bar{\lambda} = \max\{1, n\lambda\}$, holds for any $n>0$ and all $y\in\mathcal{Y}$.  
\end{cor}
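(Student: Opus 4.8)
The plan is to decompose the error $\abs{p_c(y)-\hat p_c(y)}$ by a triangle inequality through the two virtual estimates $\bar p_c$ and $\check p_c$ introduced for Lemmas~\ref{lem:est_uncertainty}--\ref{lem:class_cont}, control each of the three resulting pieces with the corresponding lemma, and then combine the probabilistic statements by intersecting the underlying events.

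Concretely, first I would write the deterministic inequality
\[
\abs{p_c(y)-\hat p_c(y)} \le \abs{p_c(y)-\bar p_c(y)} + \abs{\bar p_c(y)-\check p_c(y)} + \abs{\check p_c(y)-\hat p_c(y)},
\]
valid for every $y\in\mathcal{Y}$, $c\in\mathcal{C}$ and $n>0$. The first summand is handled by Lemma~\ref{lem:est_uncertainty}, which under Assumption~\ref{ass:class_rkhs_bound} gives the deterministic bound $\sqrt{\Gamma}\,\varrho(y)$. The second summand is handled by Lemma~\ref{lem:meas_uncertainty} with the choice $\delta=\delta_\mathrm{class}$; this holds on an event $E_1$ of probability at least $1-\delta_\mathrm{class}$ and contributes $\tfrac{\varrho(y)}{4\sqrt{n\lambda}}\sqrt{\log(\det(K+\bar\lambda I))-2\log(\delta_\mathrm{class})}$. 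The third summand is handled by Lemma~\ref{lem:class_cont}, which holds on an event $E_2$ of probability at least $1-\delta_\mathrm{MMD}$ — precisely the event that the identification test of Proposition~\ref{prop:mmd} returns the true context — and contributes the $\delta_\mathrm{MMD}$-dependent term together with the additive, non-$\varrho$ part $(1-\delta_\mathrm{MMD})\mathbf{1}(K+n\lambda I)^{-1}K_y$.

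It then remains to intersect the events. Since $E_1$ depends only on the sub-Gaussian label noise of Lemma~\ref{lem:subGaussian} while $E_2$ depends only on the outcome of the MMD-based identification, the two are independent, so $\mathbb{P}(E_1\cap E_2)\ge(1-\delta_\mathrm{class})(1-\delta_\mathrm{MMD})$. On $E_1\cap E_2$, summing the three right-hand sides and factoring $\varrho(y)$ out of the first three contributions reproduces exactly the inequality claimed in the corollary, with the last term carried over verbatim from Lemma~\ref{lem:class_cont}. The quantifier "for any $n>0$ and all $y\in\mathcal{Y}$" is inherited because each of the three lemmas has it.

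The only genuine subtlety — the \emph{hard part} — is the probability bookkeeping: one must check that $\bar p_c$ and $\check p_c$ are defined consistently across Lemmas~\ref{lem:est_uncertainty}--\ref{lem:class_cont} so that the telescoping triangle inequality is legitimate, and one must be willing to invoke independence of $E_1$ and $E_2$ to obtain the product form $(1-\delta_\mathrm{class})(1-\delta_\mathrm{MMD})$; a plain union bound would yield only the weaker $1-\delta_\mathrm{class}-\delta_\mathrm{MMD}$. Everything beyond that is a mechanical addition of the three bounds.
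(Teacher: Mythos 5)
Your proposal matches the paper's argument: the corollary is obtained exactly by chaining Lemmas~\ref{lem:est_uncertainty}, \ref{lem:meas_uncertainty}, and \ref{lem:class_cont} via the triangle inequality through $\bar{p}_c$ and $\check{p}_c$, factoring out $\varrho(y)$, and multiplying the success probabilities of the label-noise and context-identification events to get $(1-\delta_\mathrm{class})(1-\delta_\mathrm{MMD})$. The probability bookkeeping you flag (product form rather than a union bound) is precisely what the paper does implicitly when it says the corollary follows by ``combining the lemmas,'' so your write-up is correct and essentially identical in approach.
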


\begin{remark}
So far, we assumed that the number of contexts $\abs{\mathcal{C}}=m$ is given a priori.
Accounting for potential unknown contexts is straightforward.
In that case, we consider the number of contexts $m_n$ as a variable that can change as we gather more data (more $(y,c)$ tuples).
Whenever we receive a previously unseen context, we increase $m_{n+1}=m_n+1$.
Since we have not seen this context before, we can create its measurement vector as a vector of all zeros except for a one in the last entry.
\end{remark}

\subsection{Safe learning}
\label{sec:safe_learning}

Finally, we show how all previous results can be merged into a safe learning algorithm (see Algorithm~\ref{alg:safe_learning}).

\begin{algorithm}
\small
\caption{Pseudocode of the safe reinforcement learner.}
\label{alg:safe_learning}
\begin{algorithmic}[1]
\State \textbf{Input:} Measurements $y$, safety threshold $p_\mathrm{safe}$
\For {$c\in\mathcal{C}$}
\State Compute $\hat{p}_c(y)$ using~\eqref{eqn:class_prob_cme}
\State Estimate $\Delta p_c(y) = \abs{\hat{p}_c(y)-p_c(y)}$ with Cor.~\ref{cor:prob_uncertainty}
\If {$\hat{p}_c(y)-\Delta p_c(y)>p_\mathrm{safe}$}
\State \textbf{Return:} Context $c$
\Else
\State Perform experiment, measure $X$ trajectory
\For {$c\in\mathcal{C}$}
\If {$\mmdsq(X, X_c)$ below threshold from Prop.~\ref{prop:mmd}}
\State \textbf{Return:} Context $c$
\EndIf
\EndFor
\EndIf
\EndFor
\State \textbf{Return:} Context $c\notin\mathcal{C}$
\end{algorithmic}
\end{algorithm}

We need one more assumption before analyzing the safety of Algorithm~\ref{alg:safe_learning}.
We assume that all relevant safety information that $y$ can provide is encoded in the context.
\begin{assume}
\label{ass:cont_full_inf}
Let $P(\text{safe})$ denote the probability of an experiment being safe.
We have $P(\mathrm{safe}\mid c, y) = P(\mathrm{safe}\mid c)$.
\end{assume}

Then, the safety of Algorithm~\ref{alg:safe_learning} can be formalized as follows.

\begin{theo}
\label{thm:safety}
Given Assumptions~\ref{ass:safeopt_rkhs_norm}--\ref{ass:cont_full_inf}.
Then, following Algorithm~\ref{alg:safe_learning}, for any $n\ge 0$, the experiment is safe with probability at least $(1-\delta_\mathrm{safe})(p_\mathrm{safe})(1-\delta_\mathrm{class})(1-\delta_\mathrm{MMD})$.
\end{theo}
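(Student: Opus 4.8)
The plan is to track the probability that each stage of Algorithm~\ref{alg:safe_learning} behaves as intended, and to combine these probabilities by a union-bound / chain argument conditioned on Assumption~\ref{ass:cont_full_inf}. The algorithm returns a context $c$ via one of two routes: either the classifier is confident enough (line~5 fires) or, failing that, the MMD test identifies the context (line~10 fires). In either case the subsequent exploration experiment is executed by the underlying safe learner (\textsc{SafeOpt} of~\cite{berkenkamp2021bayesian}), whose safety guarantee holds with probability $1-\delta_\mathrm{safe}$ \emph{given} that the correct context is supplied (Assumptions~\ref{ass:safeopt_rkhs_norm}--\ref{ass:safeopt_starting_point}). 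By Assumption~\ref{ass:cont_full_inf}, the measurement $y$ contributes no safety-relevant information beyond $c$, so it suffices to argue that the context handed to the safe learner is the true context $c^*$ with the claimed probability.

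First I would handle the classifier branch. When line~5 fires we have $\hat{p}_c(y) - \Delta p_c(y) > p_\mathrm{safe}$. By Corollary~\ref{cor:prob_uncertainty}, with probability at least $(1-\delta_\mathrm{MMD})(1-\delta_\mathrm{class})$ the bound $\abs{p_c(y) - \hat{p}_c(y)} \le \Delta p_c(y)$ holds, hence $p_c(y) > p_\mathrm{safe}$ on that event; that is, the true posterior probability that we are in context $c$ exceeds $p_\mathrm{safe}$. Thus with probability at least $p_\mathrm{safe}$ the returned $c$ equals the true context $c^*$, and then the safe learner is safe with probability $1-\delta_\mathrm{safe}$. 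Multiplying the three independent-by-construction failure budgets gives $(1-\delta_\mathrm{safe})(p_\mathrm{safe})(1-\delta_\mathrm{class})(1-\delta_\mathrm{MMD})$ on this branch. Second, on the MMD branch (line~10), Proposition~\ref{prop:mmd} guarantees that the test returns the correct context with probability at least $1-\delta_\mathrm{MMD}$ — using Definition~\ref{def:mmd} to also rule out false positives for the wrong contexts — and then the safe learner contributes the $1-\delta_\mathrm{safe}$ factor. Since on this branch the classifier's correctness is not invoked for the decision itself, the resulting lower bound $(1-\delta_\mathrm{safe})(1-\delta_\mathrm{MMD})$ is no worse than the stated product (because $p_\mathrm{safe}, 1-\delta_\mathrm{class} \le 1$). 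The final case, returning $c \notin \mathcal{C}$ (line~17), corresponds to a genuinely new context; here one invokes Assumption~\ref{ass:safeopt_starting_point}, which supplies a parameter safe under \emph{all} contexts with probability $1-\delta_\mathrm{safe}$, so the experiment performed to probe this new context is itself safe with that probability, again dominating the stated bound.

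Combining the branches: whichever route the algorithm takes, the experiment is safe with probability at least $(1-\delta_\mathrm{safe})(p_\mathrm{safe})(1-\delta_\mathrm{class})(1-\delta_\mathrm{MMD})$, which proves the theorem. The main obstacle I anticipate is the bookkeeping of conditioning and independence: Corollary~\ref{cor:prob_uncertainty} and Proposition~\ref{prop:mmd} are high-probability statements over different sources of randomness (measurement noise in $f,g_i$, the draw of $(Y,C)$, and the trajectory noise in the MMD test), and one must be careful that treating their failure events as if they multiply is legitimate — i.e. that the safe-learner event, the classifier-bound event, and the MMD event can be intersected with the product of their probabilities, rather than merely union-bounded. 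I would make this rigorous either by assuming the relevant noise sources are independent (consistent with Assumptions~\ref{ass:safeopt_meas_noise} and~\ref{ass:mixing}) or, more conservatively, by replacing the product with $1 - \delta_\mathrm{safe} - (1-p_\mathrm{safe}) - \delta_\mathrm{class} - \delta_\mathrm{MMD}$ via a union bound; the paper's statement uses the product form, so I would justify the independence explicitly. A secondary subtlety is the step ``$p_c(y) > p_\mathrm{safe}$ implies the returned context is correct with probability $\ge p_\mathrm{safe}$'': this is exactly the definition of $p_c(y) = \mathbb{P}(C = c \mid Y = y)$, so it is immediate once the classifier bound is in force, but it should be spelled out since it is where $p_\mathrm{safe}$ enters the final product.
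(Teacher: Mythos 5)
Your proposal is correct and follows essentially the same route as the paper's own proof: a case split between the context-identification branch (Proposition~\ref{prop:mmd} plus the \textsc{SafeOpt} guarantee, giving $(1-\delta_\mathrm{safe})(1-\delta_\mathrm{MMD})$) and the confident-classification branch (Corollary~\ref{cor:prob_uncertainty} plus the threshold $p_\mathrm{safe}$, giving $(1-\delta_\mathrm{safe})\,p_\mathrm{safe}(1-\delta_\mathrm{class})(1-\delta_\mathrm{MMD})$), combined under Assumption~\ref{ass:cont_full_inf}. Your additional remarks on the new-context branch and on justifying the product of failure probabilities go slightly beyond the paper's terser argument but do not change the substance.
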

\begin{proof}
We distinguish two cases.
In the first case, we need to identify the context.
Then, we have
\begin{align*}
P(\mathrm{safe}\mid c)&=P(\mathrm{safe}\mid \hat{c}=c)P(\hat{c}=c)\\
&\ge(1-\delta_\mathrm{safe})(1-\delta_\mathrm{MMD})
\end{align*}
by~\cite[Thm.~1]{berkenkamp2021bayesian} and Proposition~\ref{prop:mmd}.
In the second case, we are certain enough about the current context to choose a policy directly.
By Corollary~\ref{cor:prob_uncertainty}, we can bound the uncertainty of our context inference.
Thus, we have
\begin{align*}
P(\text{safe}\mid c) &= P(\text{safe}\mid \hat{c}=c)P(c=\hat{c}\mid y)\\
&\ge (1-\delta_\mathrm{safe})P(c=\hat{c}\mid y) \\
&\ge (1-\delta_\mathrm{safe})p_\mathrm{safe}(1-\delta_\mathrm{class})(1-\delta_\mathrm{MMD}).
\end{align*}
\end{proof}

\section{Evaluation}
\label{sec:eval}

We evaluate our algorithm using the scenario shown in \figref{fig:exp_setup}.
After evaluating the algorithm as a whole, we provide a comparison with the standard \textsc{SafeOpt} algorithm without our contributions and then specifically investigate the performance of the classification bounds.

\subsection{Safe reinforcement learning in uncertain contexts}
\label{sec:eval_full}
In \figref{fig:exp_setup}, we have a Furuta pendulum~\cite{furuta1992swing} to whose pole we can add weights.
We also have a camera that can take a picture of the current weight, allowing us to infer which weight was added before an experiment.
We consider learning a balancing controller for the Furuta pendulum using the \textsc{SafeOpt}~\cite{berkenkamp2021bayesian} algorithm.
In particular, we consider linear state-feedback control, \ie we multiply the four-dimensional state, consisting of angle $\alpha$ and angular velocity $\dot{\alpha}$ of the rotatory arm, and angle $\theta$ and angular velocity $\dot{\theta}$ of the pole, with a feedback matrix $F\in\R^{1\times 4}$.
We then focus on letting \textsc{SafeOpt} find an optimal value for the feedback gain multiplied by the pole angle while keeping the other entries of $F$ fixed.
During the search, \textsc{SafeOpt} shall avoid failures with high probability.
Here, we define failure as the pole dropping.
For interfacing the pendulum system, we leverage code provided with~\cite{bleher2022learning}.

For \textsc{SafeOpt}, we use a Matern kernel for the parameter optimization with a length scale of 0.1 and a Gaussian kernel with a length scale of 1 for the contexts.
Apart from that, we leave the hyperparameters provided in the official code~\cite{berkenkamp2021bayesian} untouched.
For classification, we choose, inspired by the classification example in~\cite[Ch.~3]{williams2006gaussian}, a Gaussian kernel with a log length scale of 7.5 and log magnitude of 1.5.
We further set $\lambda=\num{1e-4}$ and $\Gamma=2$.
We keep those two parameters for all experiments.
In the following section, we discuss and numerically estimate the value of $\Gamma$ for a different setting.
From this discussion, we conclude that $\Gamma=2$ is a sensible choice for making sure to stay safe and not allow for failures while at the same time not being too conservative.
For the context identification, we also choose a Gaussian kernel and compute the length scale based on the data samples as suggested in~\cite{gretton2012kernel}.

At the beginning of each experiment, we let a random number generator determine the current context and, accordingly, add one of the two weights or no weight to the tip of the pole.
We then take an image of the weight using a standard smartphone camera as is shown in \figref{fig:exp_setup}.
We convert each image to grayscale and scale it to a size of $32\times32$ pixels.
For this rescaled image, we then compute the classification bounds from Corollary~\ref{cor:prob_uncertainty}.
Nevertheless, as we assume no prior knowledge, the uncertainty bounds are high during the first iterations.
Thus, we must always identify the current context in the first iterations.
In these cases, we use the initially given safe controller and excite the system by adding a chirp signal.
We show example trajectories for the two weights and one without any weight in \figref{fig:trajectories}.

\begin{figure*}
\centering
\input{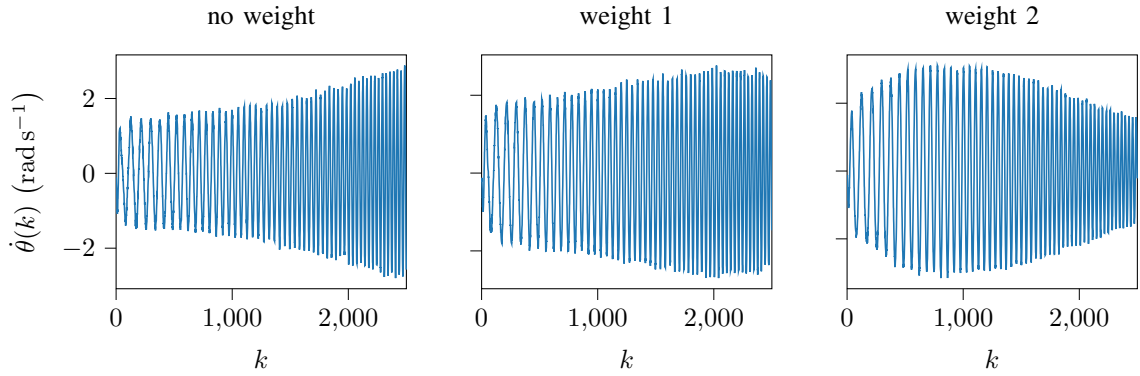}
\caption{Trajectories of context identification experiments.}
\label{fig:trajectories}
\end{figure*}

In case we need to identify the context, we seek to compute the MMD between the current trajectory and all stored trajectories.
As trajectory data is naturally correlated through time, we follow the approach from~\cite{solowjow2020kernel} and subsample the data such that the subsampled trajectories satisfy Assumption~\ref{ass:mixing}.
Therefore, we collect two independent trajectories per context and compute the MMD for increasing values of $a$, which we show in \figref{fig:cont_id_hyperparams}.
We see that for $a>50$, the MMD is reliably kept at a low level.
Thus, during context identification experiments, we subsample by choosing only every 50th sample and then compute the MMD between the current trajectory and all stored trajectories to identify the context.
During our experiments, we identified the context correctly in every iteration in which context identification was required.

\begin{figure*}
\centering 
\begin{tikzpicture}

\definecolor{darkgray176}{RGB}{176,176,176}

\begin{groupplot}[group style={group size=3 by 1}]
\nextgroupplot[
title = {no weight},
ylabel={$\mmdsq$},
xlabel={$a$},
width=0.3\textwidth,
tick align=outside,
tick pos=left,
x grid style={darkgray176},
xmin=0, xmax=100,
xtick style={color=black},
y grid style={darkgray176},
ymin=0, ymax=112.912609033961,
ytick style={color=black}
]
\addplot [semithick, blue]
table {%
0 107.535818127582
1 54.4683233633891
2 35.8139827466916
3 27.5946636689007
4 21.737741087824
5 18.0865063773627
6 15.7313355312895
7 14.2895039299519
8 12.2525507771696
9 10.0182280033595
10 10.1881605081338
11 9.24425366415673
12 8.65574092904742
13 8.25552090578248
14 6.18568618008052
15 3.91444233649071
16 7.25052282904729
17 9.40769816718661
18 5.06847466213029
19 10.5735882215504
20 5.89410530191456
21 5.23980874405192
22 5.04064229546042
23 5.05543160384247
24 4.30947579279384
25 3.93932618565406
26 3.18328140515305
27 4.39515784675892
28 5.38282070121591
29 3.39178979529167
30 1.54753918179843
31 2.44312747018276
32 9.6107510942329
33 3.68475766671169
34 2.45193616010851
35 6.91279247693729
36 4.82537061920741
37 2.57527381169531
38 8.04274290421616
39 7.50400149331673
40 4.70105435493487
41 2.08311981783519
42 1.81936184973127
43 2.22214190566224
44 2.0405274704619
45 3.6740065878312
46 3.77163798785144
47 1.84685191679576
48 1.41937808124708
49 0.997859200887383
50 2.6160152486667
51 5.39571174273991
52 1.98457026811228
53 3.8513015126681
54 2.99836838756545
55 5.3621452984096
56 2.03304477457921
57 2.39370931940753
58 4.40309699249212
59 3.52590471305512
60 4.65907715544549
61 1.31866362911729
62 1.71328714389274
63 1.68284619641686
64 3.4836230481404
65 4.76568494759469
66 2.30864680616762
67 2.09467128101683
68 4.8401081939702
69 2.1514233543185
70 4.52336172390783
71 5.10333213025055
72 1.370737621153
73 2.05792193113349
74 2.16037181965924
75 1.54852159693856
76 1.86750549715044
77 4.45996328659459
78 4.37920593686694
79 4.07210253080857
80 3.53779212308175
81 2.92105673498231
82 0.65889447745733
83 2.26273859446201
84 1.37079821180592
85 0.505925694053984
86 2.14677667936006
87 0.758452415340203
88 1.77249948377161
89 1.62856529086994
90 1.18336749970519
91 1.59293455834911
92 1.43810947350397
93 2.75070788143336
94 3.55906131213012
95 0.682126495284531
96 2.0750788253232
97 0.960487806339807
98 4.49816440322571
99 0
};

\nextgroupplot[
title = {weight~1},
xlabel={$a$},
width=0.3\textwidth,
tick align=outside,
tick pos=left,
x grid style={darkgray176},
xmin=0, xmax=100,
xtick style={color=black},
y grid style={darkgray176},
ymin=0, ymax=57,
ytick style={color=black}
]
\addplot [semithick, blue]
table {%
0 53.8909964277778
1 26.782401307731
2 18.0488753278816
3 13.4373372625633
4 11.3069837700812
5 9.2781805482332
6 8.14512499578141
7 7.85615459059278
8 5.39685136267336
9 6.26102898473841
10 5.52631722839637
11 5.24449637556278
12 5.28543992805606
13 3.54779226819338
14 3.58035600793024
15 5.04636641691324
16 5.6473921691391
17 7.23810044426139
18 3.88287767958572
19 6.95823990703973
20 4.15353641509772
21 2.82722160834486
22 2.52438803967891
23 3.6656285901634
24 2.79623801750633
25 3.87037515097576
26 1.75041053171406
27 1.79028497887039
28 3.00956536126735
29 2.2127888186389
30 0.721083062014105
31 3.89636118122639
32 6.48586225865337
33 4.00760904733482
34 2.24279126365581
35 6.61828684804266
36 2.54003174270794
37 1.90071479150163
38 3.89445596397851
39 6.61257699387053
40 3.00642989283049
41 1.0112828006536
42 1.22095138478806
43 1.18248232355263
44 1.43665281659443
45 1.23072561885453
46 2.6569389858976
47 3.32214371353858
48 1.67213218037523
49 1.67175673782493
50 2.11495811561
51 4.10792475146851
52 0.573233479134476
53 2.71651834400768
54 1.08237996774599
55 2.67047667767259
56 1.40544452630836
57 1.01515530206639
58 2.39221569478435
59 2.72580708134272
60 2.85486397701898
61 2.05087262306693
62 1.70756685266704
63 2.62545756592389
64 4.55184808953517
65 3.25871673920105
66 1.18625957582278
67 2.55213025660241
68 3.95956610164131
69 2.00243513617092
70 1.48347149446874
71 5.15928441124943
72 1.90651605095104
73 0.505887002528722
74 1.35560975309251
75 1.53517872906107
76 0.753018098569435
77 2.08892721953252
78 2.45165755268222
79 4.50539032280954
80 3.11376475245682
81 2.13369217994789
82 0.962842852614625
83 1.25695025668777
84 1.37242875299851
85 0.189802625205227
86 1.75442782594673
87 0.687102883410046
88 0.750577666834302
89 1.96187816483823
90 1.72097249276188
91 0.726943407297701
92 0.914929549505856
93 2.89172662099584
94 1.72003678486282
95 1.76861293526838
96 0.977413925863953
97 0.740095392734355
98 3.22055053686696
99 0
};

\nextgroupplot[
title = {weight~2},
xlabel={$a$},
width=0.3\textwidth,
tick align=outside,
tick pos=left,
x grid style={darkgray176},
xmin=0, xmax=100,
xtick style={color=black},
y grid style={darkgray176},
ymin=0, ymax=30,
ytick style={color=black}
]
\addplot [semithick, blue]
table {%
0 28.0138880357221
1 13.929585360319
2 9.30433663640112
3 6.79418734060666
4 5.28896969817302
5 4.5037199704539
6 3.92225490807945
7 3.24212912682252
8 3.15122035692117
9 2.20348192933784
10 2.4450126783627
11 2.32487913086532
12 1.94049419292778
13 2.18862328935519
14 1.42545698307615
15 0.932883017157149
16 0.802976725096391
17 1.00310252421843
18 0.904478262744613
19 1.36940121341307
20 0.954364070575321
21 0.691602119842544
22 0.791006732726143
23 0.730205826593042
24 1.29795356754099
25 0.455020446504773
26 1.84491651475432
27 1.38258192843802
28 0.98583570448492
29 1.10418385223787
30 2.2136121750637
31 1.22678830097748
32 2.2164899433811
33 2.48566116909161
34 1.7197654214636
35 2.70988742703553
36 4.66772205239255
37 1.91118002856996
38 0.821351336531629
39 1.78915890815738
40 1.82149666446979
41 1.28369043862704
42 0.954154945913106
43 0.46519657555591
44 0.331231141662438
45 0.348367007783824
46 0.383647875635976
47 0.294469035276029
48 0.339115630683432
49 0.435423281830307
50 0.38535108688501
51 0.35132991617898
52 0.467668566747561
53 0.649990966318662
54 0.431350286375226
55 0.806892480552393
56 0.546775864132064
57 1.42772206197669
58 2.21456403537895
59 1.15631294116837
60 0.78782450812044
61 1.97443012357977
62 0.124003997694422
63 0.883777886931131
64 3.22067844134949
65 0.378669101455726
66 0.418524345293519
67 0.343492976700828
68 3.08963989706739
69 2.11658489620154
70 0.486466189587128
71 3.2330715790903
72 2.73947115272216
73 2.16102441147982
74 0.97197012082931
75 0.869590002071329
76 0.227054806806387
77 0.297733180699222
78 0.621327423281686
79 0.35411403080826
80 0.662900130667378
81 0.773252881784629
82 0.316836147485093
83 1.31119528664549
84 0.153791577374074
85 0.845140935847827
86 0.919442827252217
87 0.0595297436799065
88 1.26130962041653
89 0.604330817030919
90 1.67457306386825
91 0.335398737366298
92 0.436300965931996
93 0.816614132223207
94 0.523904318332543
95 0.313597579833801
96 1.17913185956473
97 1.65642739886777
98 1.3507996627075
99 0
};
\end{groupplot}

\end{tikzpicture}
\caption{MMD of context identification experiments for different weights for varying $a$. \capt{For $a>50$, we see that the MMD is at a low level, \ie trajectories are approximately independent.}}
\label{fig:cont_id_hyperparams}
\end{figure*}
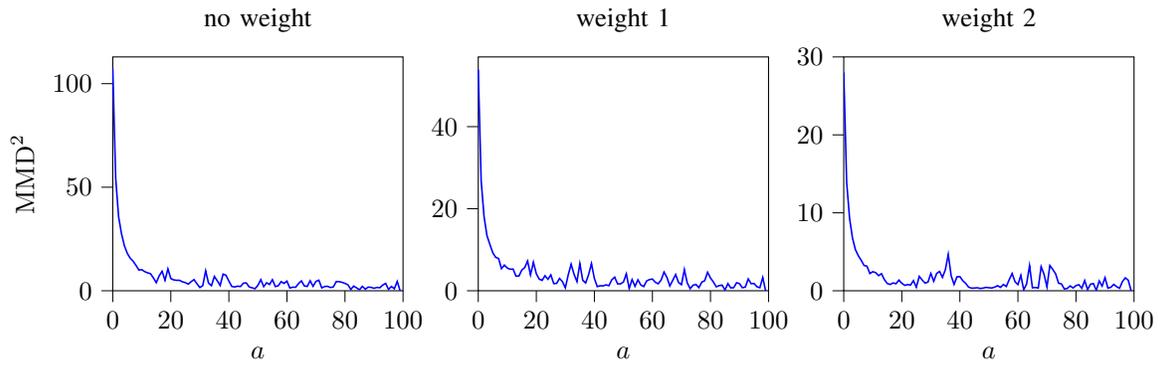

Over time, we build a data set of weight images and contexts that allows us to make more confident classification decisions.
In \figref{fig:camera_results}, we show the classification probability estimates and uncertainty bounds of ten randomly selected images given a data set consisting of \emph{(i)} the first ten images in our data set (top row), \emph{(ii)} the first half of the data set (middle row), and \emph{(iii)} the entire data set (312 images, bottom row).
We can see that the uncertainty steadily decreases. 
After accessing the entire data set, we can make classification decisions with confidence above \SI{70}{\percent} in the case of weight two.
We consider this to be a relatively small data set for image classification.
We further see a few misclassifications, marked by red crosses in \figref{fig:camera_results}, especially for the small data sets.
Those are accompanied by large uncertainty intervals, \ie our algorithm correctly identifies that the output of the classifier should not be trusted in those cases.

\begin{figure*}
\centering
\input{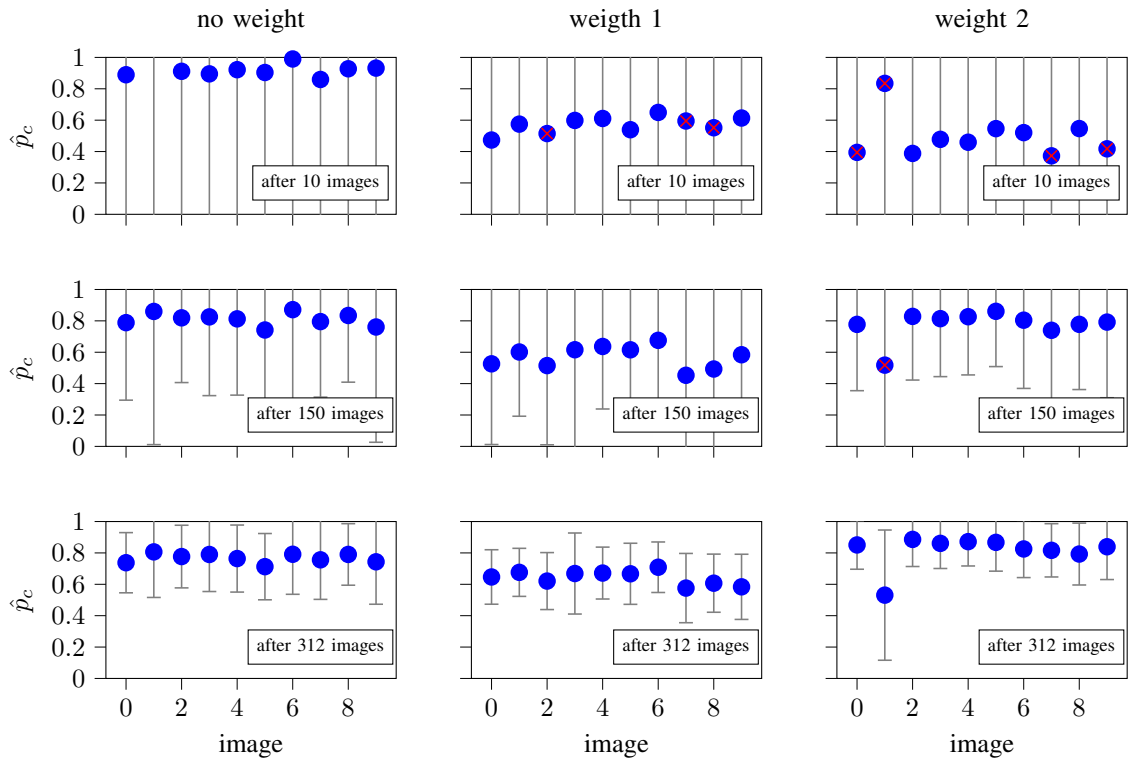}
\caption{Prediction of weights based on camera images. \capt{We show the prediction $\hat{p}_\mathrm{c}$ and uncertainty intervals from Corollary~\ref{cor:prob_uncertainty} for ten images without weight (left), ten with weight one (middle), and ten with weight two (right).
Wrong predictions are marked with red crosses.
From top to bottom, we see how the uncertainty intervals decrease from a data set of ten images, over one with around 150 images (middle), to the full data set of 312 images.}}
\label{fig:camera_results}
\end{figure*}

During all experiments, the pole of the Furuta pendulum never dropped, \ie we successfully retained safety.

\subsection{Comparison}
\label{sec:comp}

Having shown the general applicability of our algorithm, we next compare the resulting algorithm to \textsc{SafeOpt} without our bounds and context identification.
On the one hand, identifying the context in cases where the classifier is too uncertain comes at the expense of requiring additional experimentation.
On the other hand, considering contexts may improve performance and even be necessary to ensure safety.

For the comparison, we again consider the Furuta pendulum, but this time in simulation, again based on the code from~\cite{bleher2022learning}, and change pole mass and length in the simulation code.
We then let \textsc{SafeOpt} optimize the feedback gain multiplied by the pole angle while not providing any information about the current context.
After, we run the algorithm proposed in this work.
Instead of camera images, in the simulated case, we assume that we receive noisy measurements, where the noise variance is normally distributed with a standard deviation of 0.1, of the height of the weight at the beginning of an experiment.
Given the noisy height measurement, we evaluate the classifier and classification bounds and accept the outcome if the lower bound is above $p_\mathrm{safe} = 0.8$.
Otherwise, we perform an identification experiment as before.
The parameter $p_\mathrm{safe}$ is a tuning parameter and mainly depends on the task at hand, \ie the consequences of misclassification and, hence, a potential violation of a safety constraint.
We discuss the choice of $p_\mathrm{safe}$ in more detail in \secref{sec:sensitivity}.
Having certainty about the context, we perform a \textsc{SafeOpt} experiment.
We adopt the kernel parameters for \textsc{SafeOpt}, but reduce the length scales of the kernel for classification to 0.1.

We report results in \figref{fig:comp}.
The left plot shows the experimentation time required for pure \textsc{SafeOpt}, \textsc{SafeOpt} with both context identification and classification, and \textsc{SafeOpt} where we run a context identification at the beginning of each experiment without attempting any classification.
Each context identification and also each \textsc{SafeOpt} experiment lasts \num{2500} samples with an underlying sampling frequency of \SI{200}{\hertz}, and we run the entire loop for \num{1000} iterations.
Clearly, when identifying the context at each time step, the overall required experimentation time is double the time the pure \textsc{SafeOpt} algorithm needs.
Nevertheless, we see that when leveraging the classification, we can already, at this stage, save some time. 
Considering that it initially requires some training time until the classification starts to be effective, we can expect even more significant relative savings when running the algorithm for a longer time.
But, certainly, running \textsc{SafeOpt} while ignoring the unknown contexts will require the least time.
However, in the right plot of \figref{fig:comp} we also see the downside of this approach: while both our scheme and the one that identifies the context at the beginning of each experiment have zero failures after \num{1000} iterations, \textsc{SafeOpt} without considering contexts accumulated 55.
While this is uncritical in a simulation experiment, in real experiments with costly and fragile hardware, this can be problematic.
The other extreme case would be to consider \textsc{SafeOpt} with contexts, assuming contexts to be known.
In essence, that would result in equivalent performance in terms of failures as the runs we did with context identification and the runs we did with both context identification \emph{and} classification, as both always recovered the true context.
However, it would also result in the same training time as \textsc{SafeOpt}, as we would assume that the context is known, rendering both context identification and classification unnecessary.
Thus, under this assumption, the contributions of this paper would not be required, and pure \textsc{SafeOpt}, or a more advanced version of the algorithm, would be the more suitable choice.

\begin{figure}
\centering
\begin{tikzpicture}
\begin{groupplot}[group style={group size=2 by 1, horizontal sep=1.5cm}]
\nextgroupplot[
tick align=outside,
tick pos=left,
width=0.22\textwidth,
height=0.15\textheight,
ylabel={time (h)},
symbolic x coords={\textsc{SafeOpt}, Ours, ContID},
ymin=0, ymax=7,
xtick=data,
]
\addplot[ybar, fill=blue]coordinates{
    (\textsc{SafeOpt}, 3.472222222)
    (Ours, 6.437455556)
    (ContID, 6.9444)
};

\nextgroupplot[
tick align=outside,
tick pos=left,
width=0.22\textwidth,
height=0.15\textheight,
ylabel={\# failures},
symbolic x coords={\textsc{SafeOpt}, Ours, ContID},
ymin=0, ymax=60,
xtick=data,
]
\addplot[ybar, fill=blue]coordinates{
    (\textsc{SafeOpt}, 55)
    (Ours, 0)
    (ContID, 0)
};
\end{groupplot}
\end{tikzpicture}
\caption{Training time and number of failures for pure \textsc{SafeOpt}, \textsc{SafeOpt} with context identification \emph{and} classification (our), and \textsc{SafeOpt} with context identification before the start of every experiment (ContID). \capt{While our extensions require more samples for the additional context identification and, therefore, more training time, we do not incur any failures while we have several when only using \textsc{SafeOpt}.}}
\label{fig:comp}
\end{figure}
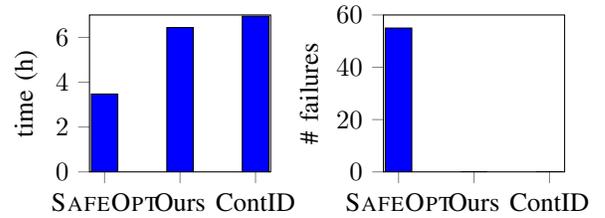

\subsection{Sensitivity analysis and limitations}
\label{sec:sensitivity}

The two examples have shown the benefit of the classification bounds proposed in this paper.
Nevertheless, when to accept a classified context without context identification depends on $p_\mathrm{safe}$, which is a tuning parameter that we set to 0.8 in the previous examples.
In this section, we show how the choice of this parameter influences the classification results.
Further, we discuss a case with more and gradually changing contexts.

We still consider the same setup as before, with the simulated Furuta pendulum and noisy height measurements providing information about the current contexts.
This time, we consider five contexts, where the heights are $h \in \{1, 2, 2.5, 2.75, 2.875\}$, disturbed with normally distributed noise with a standard deviation of 0.1.
Thus, in this case, at least the last two contexts are hard to distinguish for the classifier.
We then run five times \num{2000} experiments picking one of $p_\mathrm{safe}\in\{0.5, 0.6, 0.7, 0.8, 0.9\}$ for each of the five runs, and report in \figref{fig:sensitivity} which contexts were classified correctly (in blue) and incorrectly (in red) for the different $p_\mathrm{safe}$ values.
Clearly, as we increase the safety threshold, the classifier less often exceeds it. On the other hand, also the number of misclassifications decreases as we increase $p_\mathrm{safe}$.
We can further see that for the contexts that are much farther apart from each other than the noise level, there are no classification errors even for $p_\mathrm{safe}=0.5$.
However, especially for $h=2.75$, we have relatively many misclassifications and, even for $p_\mathrm{safe}=0.5$, overall only few cases in which the classifier exceeds the threshold.
Thus, when contexts are hard to distinguish, our classifier will often report low certainty and render the context classification necessary.

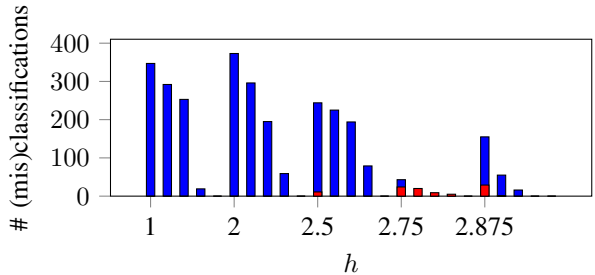
\begin{figure}
\centering 
\begin{tikzpicture}
\begin{groupplot}[group style={group size=1 by 1}]

\nextgroupplot[
ylabel={\# (mis)classifications},
tick align=outside,
tick pos=left,
width=0.44\textwidth,
height=0.15\textheight,
ymin=0,
xlabel={$h$},
xtick={0, 1, 2, 3, 4},
xticklabels={1, 2, 2.5, 2.75, 2.875},
bar width=0.1,
]
\addplot[ybar, fill=blue]coordinates{
    (0, 347)
    (0.2, 292)
    (0.4, 253)
    (0.6, 19)
    (0.8, 0)
    (1, 373)
    (1.2, 296)
    (1.4, 195)
    (1.6, 59)
    (1.8, 0)
    (2, 244)
    (2.2, 225)
    (2.4, 194)
    (2.6, 79)
    (2.8, 0)
    (3, 43)
    (3.2, 0)
    (3.4, 0)
    (3.6, 0)
    (3.8, 0)
    (4, 155)
    (4.2, 55)
    (4.4, 16)
    (4.6, 0)
    (4.8, 0)
};
\addplot[ybar, fill=red]coordinates{
    (0, 0)
    (0.2, 0)
    (0.4, 0)
    (0.6, 0)
    (0.8, 0)
    (1, 0)
    (1.2, 0)
    (1.4, 0)
    (1.6, 0)
    (1.8, 0)
    (2, 11)
    (2.2, 0)
    (2.4, 0)
    (2.6, 0)
    (2.8, 0)
    (3, 24)
    (3.2, 20)
    (3.4, 9)
    (3.6, 5)
    (3.8, 0)
    (4, 29)
    (4.2, 1)
    (4.4, 0)
    (4.6, 0)
    (4.8, 0)
};
\end{groupplot}
\end{tikzpicture}
\caption{Correct and incorrect classifications for the simulated Furuta pendulum with noisy measurements and different values of $p_\mathrm{safe}$. \capt{For each context, we see the number of correct classifications in blue and misclassifications in red, from left to right, for increasing $p_\mathrm{safe}$. The lower $p_\mathrm{safe}$, the more contexts we can classify, but the larger also our errors, especially for the two contexts that are close together.}}
\label{fig:sensitivity}
\end{figure}

\subsection{Classification bounds}
\label{sec:class_eval}

Next, we evaluate the performance of the classification bounds.
We first qualitatively compare the nature of our uncertainty bounds with more standard, expected risk bounds from~\cite{hsu2018hyperparameter} and the recently proposed deterministic bounds from~\cite{maddalena2021deterministic} in a simple, synthetic example.
Then, we demonstrate the applicability of our bounds in two standard classification benchmarks: the modified National Institute of Standards and Technology (MNIST) dataset~\cite{lecun1998gradient} and the German traffic sign recognition benchmark (GTSRB)~\cite{stallkamp2012man}.
In this part of the evaluation, we disregard the uncertainty from context identification but investigate directly the error $\abs{p_\mathrm{c}(y)-\check{p}_\mathrm{c}(y)}$.

\fakepar{Qualitative comparison}
For a qualitative comparison, we choose a probability function $p_0(y) = (1+\exp(-y+1))^{-1}$ and $p_1(y) = 1-p_0(y)$, where $y$ is a scalar parameter. 
The training set consists of 50 $y$ values in the range $-6$ to $-4.7$, 50 $y$ values between 0.5 and 1.78, and 50 $x$ values between 5.7 and 7.
We sample a context for each $x$ value. 
The context is either zero or one with probability $p_0(y)$ and $p_1(y)$, respectively.

For all three approaches, we then compute the estimate $\hat{p}_c$ using CMEs as presented in \secref{sec:prel_class}, and their respective bounds for 100 $y$ values sampled uniformly in the interval $[-6, 7]$.
For all approaches, we use a Gaussian kernel with length scale 1.
In the appendix of~\cite{scharnhorst2021robust}, the authors present a method to empirically estimate a bound on the RKHS norm of a given function.
For the function $p_0$, we get $\Gamma=2$ as a conservative estimate in the region where we evaluate the function.
Thus, we choose $\Gamma=2$.

The bounds in~\cite[Thm.~4]{hsu2018hyperparameter} are expected risk bounds.
As such, they do not directly allow us to bound the error $\abs{p_c(y)-\check{p}_c(y)}$.
Instead, we can infer the likelihood of misclassifying a sample for any input $y$.
The bounds are extremely conservative if used in such a way and report a misclassification risk above \SI{99}{\percent} for the chosen hyperparameters.
However, making such predictions is not the purpose for which these bounds were developed.
In~\cite{hsu2018hyperparameter}, they were used to tune the hyperparameters of a CME-based classifier.
The authors showed that the bounds are very useful in providing a trade-off between accuracy on the training data and model complexity that leads to good generalization properties.
Nevertheless, even with the optimized hyperparameters in~\cite{hsu2018hyperparameter}, the bounds are too conservative to apply to our problem setting.

The bounds from~\cite{maddalena2021deterministic} are closer to the ones proposed in this article.
Similar to ours, they are input-dependent and, thus, give, for any specific input $x$, an upper bound on the deviation $\abs{p_c(y)-\check{p}_c(y)}$.
However, while our bound is a high probability bound, the bound from~\cite{maddalena2021deterministic} is deterministic.
Their central assumption is that training data is corrupted by bounded noise with a known bound.
In our setting, we aim at estimating $p_c(y)$ while only receiving binary labels.
We can interpret this as measuring $p_c(y)$ with a maximum error of one.
Then, we can use the bounds from~\cite{maddalena2021deterministic}.
In particular, we use the simplified version of the bounds provided in~\cite[Thm.~1]{maddalena2021deterministic} that do not require computing the RKHS norm of $\bar{p}_c$.

In \figref{fig:comparison}, we compare our bounds (bottom) with those from~\cite{maddalena2021deterministic} (top).
We show the true probability function $p_0(y)$, the estimates $\check{p}_0(y)$, and the bounds in both plots.
The data-dependent nature can be seen in both figures. 
However, the bounds from~\cite{maddalena2021deterministic} are way more conservative than ours.
This is natural since the bounds from~\cite{maddalena2021deterministic} are deterministic, \ie they need to hold for any ground truth function compatible with the data and noise model, while ours are high probability statements.
Thus, while the bounds from~\cite{maddalena2021deterministic} are essential results for general function estimation and applications in, for instance, robust control, the bounds are non-informative for classification. 

\begin{figure}
\centering
\input{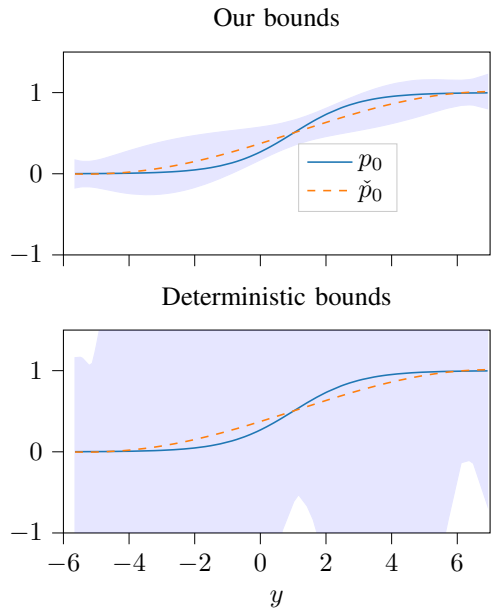}
\caption{Our bounds compared with those of~\cite{maddalena2021deterministic}. \capt{The bounds are illustrated through the blue shaded areas. Compared to our bounds (top), the bounds from~\cite{maddalena2021deterministic} (bottom) are, due to their deterministic nature, way more conservative and, therefore, non-informative for classification.}}
\label{fig:comparison}
\end{figure}

This comparison shows that state-of-the-art bounds for multi-class classification do not meet the requirements for our problem setting.
We require bounds that are both input-dependent and probabilistic such that they yield practically usable worst-case bounds.
In contrast to existing bounds, the bounds we derived in \secref{sec:class} meet both requirements.

\fakepar{MNIST}
To challenge the scalability of the bounds, we next consider the popular MNIST dataset.
The MNIST dataset consists of images of handwritten digits.
Thus, the task for our classifier is to predict which digit can be seen in a specific image.
At the same time, we seek to infer how certain we are about the classification through our bounds\footnote{The code for this example is available at \href{https://github.com/baumanndominik/cme_based_classification_bounds}{https://github.com/baumanndominik/cme\_based\_classification\_bounds}.}.

For image classification, we again choose a Gaussian kernel with a log length scale of $7.5$ and log magnitude of $2.6$.
We normalize the images such that the pixels take values in $[-1, 1]$.  

The MNIST dataset is split into a training and test set.
We use the first \num{10000} training images to train our algorithm and then evaluate it on the first occurrence of each digit in the test dataset.
For the test examples, we show both the probability of the most likely digit and the uncertainty bounds in \figref{fig:mnist}.
After having seen \num{10000} training examples, the uncertainty in many test cases is still very high and would not enable us to make confident predictions.
While this represents a limitation, it is a natural one.
A single observed outcome for a specific parameter setting can already yield significant insights in a regression task. 
This is not the case in classification.
If we throw a coin once and it comes up tails, we cannot infer whether or not the coin is likely to be fair.
Especially in this light, the bounds developed herein are an essential asset in the classification setting.
Popular classifiers have been reported to be over-confident~\cite{bai2021don}.
Hence, it is crucial to add reliable bounds, particularly if the classifier's output is used in safety-critical environments.

However, \figref{fig:mnist} also shows that in some instances, \eg for zero, one, and seven, we are confident that our estimator classifies correctly.
For comparison, we computed the expected risk bounds from~\cite[Thm.~4]{hsu2018hyperparameter}.
Those reveal a significant misclassification risk over the entire input domain.
Thus, they would not let us make any confident classification decision, rendering the entire classification useless.
This underlines the importance of input-dependent uncertainty bounds for safe learning.
If we either accept all or none of the predictions, it may take us too long until we are confident enough.
With the input-dependent bounds developed in this article, we judge the prediction uncertainty locally at the current input and can make confident classification decisions in specific parts of the input space even if, overall, the uncertainty is still high.

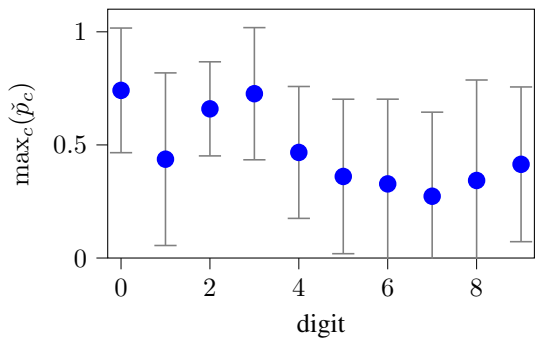
\begin{figure}
\centering
\tikzsetnextfilename{mnist}
\begin{tikzpicture}

\definecolor{darkgray176}{RGB}{176,176,176}
\definecolor{steelblue31119180}{RGB}{31,119,180}

\begin{groupplot}[group style={group size=1 by 1}]
\nextgroupplot[
tick align=outside,
tick pos=left,
x grid style={darkgray176},
xmin=-0.3, xmax=9.3,
xtick style={color=black},
y grid style={darkgray176},
ymin=0, ymax=1.1,
ytick style={color=black},
xlabel=digit,
ylabel=$\max_c(\check{p}_c)$,
height=0.2\textheight,
width=0.4\textwidth
]
\addplot [semithick, gray]
table {%
0 0.465677158892848
0 1.01663590302034
};
\addplot [semithick, gray]
table {%
-0.25 1.01663590302034
0.25 1.01663590302034
};
\addplot [semithick, gray]
table {%
-0.25 0.465677158892848
0.25 0.465677158892848
};
\addplot [semithick, blue, mark=*, mark size=3, mark options={solid}, only marks]
table {%
0 0.741156530956596
};
\addplot [semithick, gray]
table {%
1 0.055398260719249
1 0.818408280263734
};
\addplot [semithick, gray]
table {%
0.75 0.818408280263734
1.25 0.818408280263734
};
\addplot [semithick, gray]
table {%
0.75 0.055398260719249
1.25 0.055398260719249
};
\addplot [semithick, blue, mark=*, mark size=3, mark options={solid}, only marks]
table {%
1 0.436903270491491
};
\addplot [semithick, gray]
table {%
2 0.451481898156429
2 0.867427083659271
};
\addplot [semithick, gray]
table {%
1.75 0.867427083659271
2.25 0.867427083659271
};
\addplot [semithick, gray]
table {%
1.75 0.451481898156429
2.25 0.451481898156429
};
\addplot [semithick, blue, mark=*, mark size=3, mark options={solid}, only marks]
table {%
2 0.65945449090785
};
\addplot [semithick, gray]
table {%
3 0.434266121441912
3 1.01853764635961
};
\addplot [semithick, gray]
table {%
2.75 1.01853764635961
3.25 1.01853764635961
};
\addplot [semithick, gray]
table {%
2.75 0.434266121441912
3.25 0.434266121441912
};
\addplot [semithick, blue, mark=*, mark size=3, mark options={solid}, only marks]
table {%
3 0.726401883900759
};
\addplot [semithick, gray]
table {%
4 0.175242073873674
4 0.758311724320307
};
\addplot [semithick, gray]
table {%
3.75 0.758311724320307
4.25 0.758311724320307
};
\addplot [semithick, gray]
table {%
3.75 0.175242073873674
4.25 0.175242073873674
};
\addplot [semithick, blue, mark=*, mark size=3, mark options={solid}, only marks]
table {%
4 0.46677689909699
};
\addplot [semithick, gray]
table {%
5 0.0191327160756961
5 0.702042685123261
};
\addplot [semithick, gray]
table {%
4.75 0.702042685123261
5.25 0.702042685123261
};
\addplot [semithick, gray]
table {%
4.75 0.0191327160756961
5.25 0.0191327160756961
};
\addplot [semithick, blue, mark=*, mark size=3, mark options={solid}, only marks]
table {%
5 0.360587700599478
};
\addplot [semithick, gray]
table {%
6 -0.0468152062768332
6 0.702264260169678
};
\addplot [semithick, gray]
table {%
5.75 0.702264260169678
6.25 0.702264260169678
};
\addplot [semithick, gray]
table {%
5.75 -0.0468152062768332
6.25 -0.0468152062768332
};
\addplot [semithick, blue, mark=*, mark size=3, mark options={solid}, only marks]
table {%
6 0.327724526946423
};
\addplot [semithick, gray]
table {%
7 -0.0984640546248255
7 0.644799238133161
};
\addplot [semithick, gray]
table {%
6.75 0.644799238133161
7.25 0.644799238133161
};
\addplot [semithick, gray]
table {%
6.75 -0.0984640546248255
7.25 -0.0984640546248255
};
\addplot [semithick, blue, mark=*, mark size=3, mark options={solid}, only marks]
table {%
7 0.273167591754168
};
\addplot [semithick, gray]
table {%
8 -0.101371409194506
8 0.786979402898361
};
\addplot [semithick, gray]
table {%
7.75 0.786979402898361
8.25 0.786979402898361
};
\addplot [semithick, gray]
table {%
7.75 -0.101371409194506
8.25 -0.101371409194506
};
\addplot [semithick, blue, mark=*, mark size=3, mark options={solid}, only marks]
table {%
8 0.342803996851928
};
\addplot [semithick, gray]
table {%
9 0.0722629639239631
9 0.756271548392851
};
\addplot [semithick, gray]
table {%
8.75 0.756271548392851
9.25 0.756271548392851
};
\addplot [semithick, gray]
table {%
8.75 0.0722629639239631
9.25 0.0722629639239631
};
\addplot [semithick, blue, mark=*, mark size=3, mark options={solid}, only marks]
table {%
9 0.414267256158407
};
\end{groupplot}

\end{tikzpicture}
\caption{Our classifier and bounds applied to the MNIST dataset. \capt{After having seen \num{10000} training images, we can already make confident decisions for some digits.}}
\label{fig:mnist}
\end{figure}

\fakepar{German traffic sign recognition benchmark}
While the MNIST dataset is widely used, it may not be obvious how misclassifying a digit could be fatal.
Therefore, we next consider the German traffic sign recognition benchmark (GTSRB)~\cite{stallkamp2012man}.
The GTSRB contains images of different traffic signs that should be classified.
Suppose an algorithm that classifies traffic signs is used, for instance, within a self-driving car that chooses its driving policy based on this classification. In that case, we must be sure about our predictions.

Also here, we consider a Gaussian kernel, this time with a log length scale of $7$ and log magnitude of $1.5$.
Besides normalizing the pixel values, we rescale the images, which a priori have varying sizes, to $32\times 32$ pixels and convert them to grayscale images.

Similar to MNIST, the GTSRB is divided into training and test set.
We randomly select ten traffic signs.
Then, we train the classifier on the first \num{1000} images that contain those signs and use the first occurrence of those signs in the test set for evaluation.
As shown in \figref{fig:gtsrb} (top), the uncertainty is still very high.
In particular, since the lower bound of the classification probability barely reaches \SI{50}{\percent}, at this stage, the classifier should, if at all, only be trusted to recognize sign number nine.
We then re-evaluate the same signs after providing \num{10000} images for training.
This significantly reduces the uncertainty (see \figref{fig:gtsrb}, bottom).
This shows that the bounds can easily be used in online learning settings and tightened as we receive more data.
After having seen \num{10000} training examples, we can also confidently (and correctly) identify signs three and eight.
Meanwhile, for sign two, for instance, the bounds tell us that the algorithm cannot provide a reliable classification probability.
Since misclassification in this example may cause accidents, knowing for which signs we cannot rely on the classifier is precious information.
This shows that the classification bounds are informative in cases with relatively few training data, as they can clearly indicate when we can start to trust the classifier.

\begin{figure}
\centering
\tikzsetnextfilename{gtrsb_low_training}
\begin{tikzpicture}

\definecolor{darkgray176}{RGB}{176,176,176}
\definecolor{steelblue31119180}{RGB}{31,119,180}

\begin{axis}[
title={after \num{1000} training points},
tick align=outside,
tick pos=left,
x grid style={darkgray176},
xmin=-0.3, xmax=9.3,
xtick style={color=black},
y grid style={darkgray176},
ymin=0, ymax=1.1,
ytick style={color=black},
xticklabels=\empty,
ylabel=$\max_c(\check{p}_c)$,
height=0.2\textheight,
width=0.4\textwidth
]
\addplot [semithick, gray]
table {%
0 0.233050627369123
0 0.729346334963431
};
\addplot [semithick, gray]
table {%
-0.25 0.729346334963431
0.25 0.729346334963431
};
\addplot [semithick, gray]
table {%
-0.25 0.233050627369123
0.25 0.233050627369123
};
\addplot [semithick, blue, mark=*, mark size=3, mark options={solid}, only marks]
table {%
0 0.481198481166277
};
\addplot [semithick, gray]
table {%
1 0.20841473036626
1 0.796084307332776
};
\addplot [semithick, gray]
table {%
0.75 0.796084307332776
1.25 0.796084307332776
};
\addplot [semithick, gray]
table {%
0.75 0.20841473036626
1.25 0.20841473036626
};
\addplot [semithick, blue, mark=*, mark size=3, mark options={solid}, only marks]
table {%
1 0.502249518849518
};
\addplot [semithick, gray]
table {%
2 0.241254267096399
2 0.633204733683757
};
\addplot [semithick, gray]
table {%
1.75 0.633204733683757
2.25 0.633204733683757
};
\addplot [semithick, gray]
table {%
1.75 0.241254267096399
2.25 0.241254267096399
};
\addplot [semithick, blue, mark=*, mark size=3, mark options={solid}, only marks]
table {%
2 0.437229500390078
};
\addplot [semithick, gray]
table {%
3 0.399257423250944
3 0.960158061801454
};
\addplot [semithick, gray]
table {%
2.75 0.960158061801454
3.25 0.960158061801454
};
\addplot [semithick, gray]
table {%
2.75 0.399257423250944
3.25 0.399257423250944
};
\addplot [semithick, blue, mark=*, mark size=3, mark options={solid}, only marks]
table {%
3 0.679707742526199
};
\addplot [semithick, gray]
table {%
4 0.33034328741834
4 0.785560087376028
};
\addplot [semithick, gray]
table {%
3.75 0.785560087376028
4.25 0.785560087376028
};
\addplot [semithick, gray]
table {%
3.75 0.33034328741834
4.25 0.33034328741834
};
\addplot [semithick, blue, mark=*, mark size=3, mark options={solid}, only marks]
table {%
4 0.557951687397184
};
\addplot [semithick, gray]
table {%
5 0.029722539714524
5 0.738792003105048
};
\addplot [semithick, gray]
table {%
4.75 0.738792003105048
5.25 0.738792003105048
};
\addplot [semithick, gray]
table {%
4.75 0.029722539714524
5.25 0.029722539714524
};
\addplot [semithick, blue, mark=*, mark size=3, mark options={solid}, only marks]
table {%
5 0.384257271409786
};
\addplot [semithick, gray]
table {%
6 0.136088858822705
6 0.593039802682899
};
\addplot [semithick, gray]
table {%
5.75 0.593039802682899
6.25 0.593039802682899
};
\addplot [semithick, gray]
table {%
5.75 0.136088858822705
6.25 0.136088858822705
};
\addplot [semithick, blue, mark=*, mark size=3, mark options={solid}, only marks]
table {%
6 0.364564330752802
};
\addplot [semithick, gray]
table {%
7 -0.0347744354521784
7 0.753902815274272
};
\addplot [semithick, gray]
table {%
6.75 0.753902815274272
7.25 0.753902815274272
};
\addplot [semithick, gray]
table {%
6.75 -0.0347744354521784
7.25 -0.0347744354521784
};
\addplot [semithick, blue, mark=*, mark size=3, mark options={solid}, only marks]
table {%
7 0.359564189911047
};
\addplot [semithick, gray]
table {%
8 0.313143030678789
8 0.855344743121016
};
\addplot [semithick, gray]
table {%
7.75 0.855344743121016
8.25 0.855344743121016
};
\addplot [semithick, gray]
table {%
7.75 0.313143030678789
8.25 0.313143030678789
};
\addplot [semithick, blue, mark=*, mark size=3, mark options={solid}, only marks]
table {%
8 0.584243886899902
};
\addplot [semithick, gray]
table {%
9 0.561891641720247
9 1.14201036853352
};
\addplot [semithick, gray]
table {%
8.75 1.14201036853352
9.25 1.14201036853352
};
\addplot [semithick, gray]
table {%
8.75 0.561891641720247
9.25 0.561891641720247
};
\addplot [semithick, blue, mark=*, mark size=3, mark options={solid}, only marks]
table {%
9 0.851951005126883
};
\end{axis}

\end{tikzpicture}
\hspace{0.5cm}
\tikzsetnextfilename{GTSRB}
\begin{tikzpicture}

\definecolor{darkgray176}{RGB}{176,176,176}
\definecolor{steelblue31119180}{RGB}{31,119,180}

\begin{axis}[
title={after \num{10000} training points},
tick align=outside,
tick pos=left,
x grid style={darkgray176},
xmin=-0.3, xmax=9.3,
xtick style={color=black},
y grid style={darkgray176},
ymin=0, ymax=1.1,
ylabel=$\max_c(\check{p}_c)$,
xlabel=sign,
height=0.2\textheight,
width=0.4\textwidth
]
\addplot [semithick, gray]
table {%
0 0.321732340562627
0 0.63198775796542
};
\addplot [semithick, gray]
table {%
-0.25 0.63198775796542
0.25 0.63198775796542
};
\addplot [semithick, gray]
table {%
-0.25 0.321732340562627
0.25 0.321732340562627
};
\addplot [semithick, blue, mark=*, mark size=3, mark options={solid}, only marks]
table {%
0 0.476860049264023
};
\addplot [semithick, gray]
table {%
1 0.260835041699509
1 0.63668729648022
};
\addplot [semithick, gray]
table {%
0.75 0.63668729648022
1.25 0.63668729648022
};
\addplot [semithick, gray]
table {%
0.75 0.260835041699509
1.25 0.260835041699509
};
\addplot [semithick, blue, mark=*, mark size=3, mark options={solid}, only marks]
table {%
1 0.448761169089865
};
\addplot [semithick, gray]
table {%
2 0.29209110774204
2 0.537982595081233
};
\addplot [semithick, gray]
table {%
1.75 0.537982595081233
2.25 0.537982595081233
};
\addplot [semithick, gray]
table {%
1.75 0.29209110774204
2.25 0.29209110774204
};
\addplot [semithick, blue, mark=*, mark size=3, mark options={solid}, only marks]
table {%
2 0.415036851411636
};
\addplot [semithick, gray]
table {%
3 0.531777365396947
3 0.877431748425139
};
\addplot [semithick, gray]
table {%
2.75 0.877431748425139
3.25 0.877431748425139
};
\addplot [semithick, gray]
table {%
2.75 0.531777365396947
3.25 0.531777365396947
};
\addplot [semithick, blue, mark=*, mark size=3, mark options={solid}, only marks]
table {%
3 0.704604556911043
};
\addplot [semithick, gray]
table {%
4 0.42775857162001
4 0.717036287021434
};
\addplot [semithick, gray]
table {%
3.75 0.717036287021434
4.25 0.717036287021434
};
\addplot [semithick, gray]
table {%
3.75 0.42775857162001
4.25 0.42775857162001
};
\addplot [semithick, blue, mark=*, mark size=3, mark options={solid}, only marks]
table {%
4 0.572397429320722
};
\addplot [semithick, gray]
table {%
5 0.192400069689057
5 0.63383036106086
};
\addplot [semithick, gray]
table {%
4.75 0.63383036106086
5.25 0.63383036106086
};
\addplot [semithick, gray]
table {%
4.75 0.192400069689057
5.25 0.192400069689057
};
\addplot [semithick, blue, mark=*, mark size=3, mark options={solid}, only marks]
table {%
5 0.413115215374958
};
\addplot [semithick, gray]
table {%
6 0.213804702708768
6 0.501278706224531
};
\addplot [semithick, gray]
table {%
5.75 0.501278706224531
6.25 0.501278706224531
};
\addplot [semithick, gray]
table {%
5.75 0.213804702708768
6.25 0.213804702708768
};
\addplot [semithick, blue, mark=*, mark size=3, mark options={solid}, only marks]
table {%
6 0.357541704466649
};
\addplot [semithick, gray]
table {%
7 0.130467046197966
7 0.626697436258587
};
\addplot [semithick, gray]
table {%
6.75 0.626697436258587
7.25 0.626697436258587
};
\addplot [semithick, gray]
table {%
6.75 0.130467046197966
7.25 0.130467046197966
};
\addplot [semithick, blue, mark=*, mark size=3, mark options={solid}, only marks]
table {%
7 0.378582241228276
};
\addplot [semithick, gray]
table {%
8 0.443540867075796
8 0.776313366433357
};
\addplot [semithick, gray]
table {%
7.75 0.776313366433357
8.25 0.776313366433357
};
\addplot [semithick, gray]
table {%
7.75 0.443540867075796
8.25 0.443540867075796
};
\addplot [semithick, blue, mark=*, mark size=3, mark options={solid}, only marks]
table {%
8 0.609927116754577
};
\addplot [semithick, gray]
table {%
9 0.692773605525063
9 1.05767052538077
};
\addplot [semithick, gray]
table {%
8.75 1.05767052538077
9.25 1.05767052538077
};
\addplot [semithick, gray]
table {%
8.75 0.692773605525063
9.25 0.692773605525063
};
\addplot [semithick, blue, mark=*, mark size=3, mark options={solid}, only marks]
table {%
9 0.875222065452918
};
\end{axis}

\end{tikzpicture}
\caption{Results of the GTSRB dataset. \capt{Also in this case, we can return practically useful bounds on classification probabilities.}}
\label{fig:gtsrb}
\end{figure}
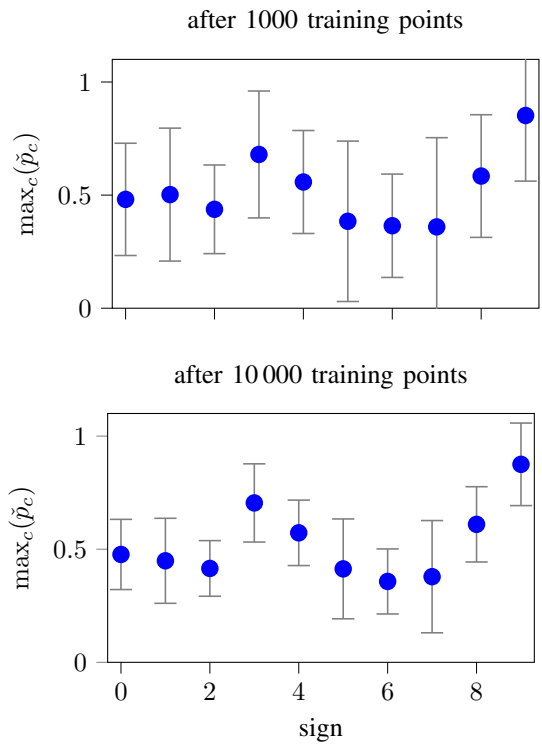




\section{Conclusions}
\label{sec:conclusions}

This paper presents a safe learning algorithm for context-conditional dynamical systems with unknown contexts.
We show how contexts can be identified from data or classified using conditional mean embeddings, providing high-probability guarantees in both cases.
Subsequently, we show how they can be combined with a popular safe learning algorithm.
We demonstrate that given measurements that allow us to distinguish contexts clearly, the classification bounds can save training time.
Otherwise, if the measurements do not allow us to distinguish contexts clearly, we need to identify the contexts through experiments, which increases training time.

A further contribution of this work is the derivation of frequentist uncertainty intervals for a multi-class classifier.
This result has applications beyond the area of safe learning.
Clearly, a limiting factor of the bounds is that we require an upper bound on the RKHS norm.
The bounds share this problem with \textsc{SafeOpt} and related safe learning approaches, which also require an upper bound on this norm.
Thus, estimating this norm from data is subject to ongoing research.

\section*{Acknowledgments}
The authors would like to thank Alexander von Rohr, Sebastian Mair, Christian Fiedler, Friedrich Solowjow, Antonio Ribeiro, and Torbj\"{o}rn Wigren for insightful discussions and comments on earlier manuscript versions, and Claas Thesing for providing the Python implementation of the MMD test.

{\appendix
\subsection{Proofs}
Here, we present extended proofs for Lemmas~\ref{lem:est_uncertainty} and~\ref{lem:meas_uncertainty}.
Before stating the proofs, we introduce some useful derivations used throughout.

\subsubsection{Useful derivations}
\label{sec:derivations}

In this section, we restate and adapt some derivations from the proof of Theorem~2 in~\cite{chowdhury2017kernelized}, which we will use in the proofs of our main results.
We first define the feature map $\varphi(y)\coloneqq k(y,\cdot)$ that maps any point from $\mathcal{Y}$ to the RKHS $\mathcal{H}_k$.
Since we define the inputs $y$ to take values in the real numbers, we can further define the inner product in the RKHS as $\langle g,h\rangle_k=g^\transp h$ and the RKHS norm $\norm{g}_k=\sqrt{g^\transp g}$ for any two functions $f$ and $g$ in $\mathcal{H}_k$.
Defining $\Phi\coloneqq\begin{pmatrix}\varphi(y_1)^\transp,\ldots,\varphi(y_n)^\transp\end{pmatrix}^\transp$, we can write $K=\Phi\Phi^\transp$, $K_y=\Phi\varphi(y)$, and $\begin{pmatrix}p_c(y_1),\ldots,p_c(y_n)\end{pmatrix}=\Phi p_c$.

The matrix $(\Phi^\transp\Phi+n\lambda I)$ is strictly positive definite.
Thus, we have
\begin{equation}
\label{eqn:shift_phi}
\Phi^\transp(\Phi\Phi^\transp+n\lambda I)^{-1} = (\Phi^\transp\Phi+n\lambda I)^{-1}\Phi^\transp. 
\end{equation}
Using~\eqref{eqn:shift_phi}, we can further conclude
\begin{equation}
\label{eqn:rewrite_phi}
\varphi(x)=\Phi^\transp(\Phi\Phi^\transp+n\lambda I)^{-1}+n\lambda(\Phi^\transp\Phi+n\lambda I)^{-1}\varphi(y),
\end{equation}
which leads us to
\begin{equation}
\label{eqn:rewrite_phi_phit}
\begin{split}
\varphi(y)^\transp\varphi(x) &= K_y^\transp(\Phi\Phi^\transp+n\lambda I)^{-1}K_y \\
&+ n\lambda\varphi(y)^\transp(\Phi^\transp\Phi+n\lambda I)^{-1}\varphi(y).
\end{split}
\end{equation}
Finally, we have
\begin{equation}
\label{eqn:power_function}
\begin{split}
&n\lambda\varphi(y)^\transp(\Phi^\transp\Phi + n\lambda I)^{-1}\varphi(x)\\
=&k(y,y) - K_y^\transp(K+n\lambda I)^{-1}K_y.
\end{split}
\end{equation}

\subsubsection{Proof of Lemma~\ref{lem:est_uncertainty}}
\label{sec:proof_lem_1}

We have for all $c\in\mathcal{Y}$
\begin{align*}
&\abs{p_c(y)-\bar{p}_c(y)} = \abs{p_c(y)-K_y^\transp(K+n\lambda I)^{-1}\textbf{p}_\textbf{y}}\\
= &\abs{\varphi(y)^\transp p_c - \varphi(y)^\transp\Phi^\transp(\Phi\Phi^\transp+n\lambda I)^{-1}\Phi p_c} \tag*{Definitions in \secref{sec:derivations}}\\
= &\abs{\varphi(y)^\transp p_c- \varphi(y)^\transp(\Phi^\transp\Phi+n\lambda I)^{-1}\Phi^\transp\Phi p_c} \tag*{Eq.~\eqref{eqn:shift_phi}}\\
= &\abs{n\lambda\varphi(x)^\transp(\Phi^\transp\Phi+n\lambda I)^{-1}p_c}\tag*{Eq.~\eqref{eqn:rewrite_phi}}\\
\le &\norm{n\lambda\varphi(y)^\transp(\Phi^\transp\Phi + n\lambda I)^{-1}}_k\norm{p_c}_k\tag*{Cauchy-Schwartz}\\
\le &\sqrt{\Gamma}\norm{n\lambda\varphi(y)^\transp(\Phi^\transp\Phi + n\lambda I)^{-1}}_k\tag*{Assumption~\ref{ass:class_rkhs_bound}}\\
=&\sqrt{\Gamma}\sqrt{n\lambda\varphi(x)^\transp(\Phi^\transp\Phi+n\lambda I)^{-1}n\lambda(\Phi^\transp\Phi+n\lambda I)^{-1}\varphi(y)}\tag*{Definition in \secref{sec:derivations}}\\
\le &\sqrt{\Gamma}(n\lambda\varphi(x)^\transp(\Phi^\transp\Phi+n\lambda I)^{-1}(\Phi^\transp\Phi+n\lambda I)\\
&(\Phi^\transp\Phi+n\lambda I)^{-1}\varphi(y))^{\sfrac{1}{2}}\tag*{K is pos.\ def.\ }\\  
= &\sqrt{\Gamma}\sqrt{k(y,y)-K_y^\transp(K+n\lambda I)^{-1}K_y}\tag*{Eq.~\eqref{eqn:power_function}},
\end{align*}
from which the claim follows through Definition~\ref{def:power_function}.

\subsubsection{Proof of Lemma~\ref{lem:meas_uncertainty}}
\label{sec:proof_lem_2}

We have for all $c\in\mathcal{Y}$
\begin{align*}
&\abs{\bar{p}_c(y)-\check{p}_c(y)}=\abs{K_y^\transp(K+n\lambda I)^{-1}(\textbf{p}_\textbf{y}-\textbf{c})}\\
= &\abs{\varphi(y)^\transp\Phi(K+n\lambda I)^{-1}(\textbf{p}_\textbf{y}-\textbf{c})}\tag*{Definitions in \secref{sec:derivations}}\\
= &\abs{\varphi(y)^\transp(K+n\lambda I)^{-1}\Phi^\transp(\textbf{p}_\textbf{y}-\textbf{c})}\tag*{Eq.~\eqref{eqn:shift_phi}}\\
\le &\norm{\varphi(x)^\transp(K+n\lambda I)^{-\sfrac{1}{2}}}_k\norm{(K+n\lambda I)^{-\sfrac{1}{2}}\Phi^\transp(\textbf{p}_\textbf{y}-\textbf{c})}_k\tag*{Cauchy-Schwarz}\\
= &\sqrt{\varphi(x)^\transp(K+n\lambda I)^{-1}\varphi(y)}\\
&\sqrt{(\Phi^\transp(\textbf{p}_\textbf{y}-\textbf{c}))^\transp(K+n\lambda I)^{-1}\Phi^\transp(\textbf{p}_\textbf{y}-\textbf{c})} \\
=&\sqrt{\frac{1}{n\lambda}}\varrho(y)\sqrt{(\Phi^\transp(\textbf{p}_\textbf{y}-\textbf{c}))^\transp(K+n\lambda I)^{-1}\Phi^\transp(\textbf{p}_\textbf{y}-\textbf{c})}\tag*{Eq.~\eqref{eqn:power_function}, Def.~\ref{def:power_function}}\\
=&\sqrt{\frac{1}{n\lambda}}\varrho(y)\sqrt{(\textbf{p}_\textbf{y}-\textbf{c})^\transp K(K+n\lambda I)^{-1}(\textbf{p}_\textbf{y}-\textbf{c})}\tag*{Definitions in \secref{sec:derivations}}.
\end{align*}
The claim then follows from~\cite[Thm.~1]{fiedler2021practical} since the random variables are $\sigma$-sub Gaussian with $\sigma\le \frac{1}{4}$ following Lemma~\ref{lem:subGaussian}.

\subsection{Example for Definition~\ref{def:mmd}}
\label{sec:mmd_example}
Defining contexts based on $\eta$ might seem to be an abstract choice at first. 
Therefore, let us here make it more intuitive by calculating it for the Furuta pendulum example from \secref{sec:eval}.
When considering the balancing of the Furuta pendulum, the system can be approximated as a linear, time-invariant system.
If we further assume that the state measurements we receive are perturbed by Gaussian noise, the resulting trajectory data also follows a Gaussian distribution.
Given that we sub-sample the data such that it is approximately \iid, we can now analytically compute the MMD.
For ease of presentation, we consider a scalar state, \eg the angular velocity of the pole that we also used in \secref{sec:eval}.
Then, for two contexts $c_\mathrm{a}$ and $c_\mathrm{b}$ that generate Gaussian data distributions $\mathcal{N}(\mu_\mathrm{a},\sigma^2_\mathrm{a})$ and $\mathcal{N}(\mu_\mathrm{b},\sigma^2_\mathrm{b})$ with a Gaussian kernel $k_\mathrm{mmd}$ we have in the infinite sample limit
\begin{align*}
    \mmd(X_\mathrm{a}, X_\mathrm{b}) &= \frac{\exp(\frac{2\abs{\mu_\mathrm{a}}^2}{2(2\sigma_\mathrm{a}^2+\gamma^2)})}{\sqrt{2\pi(2\sigma_\mathrm{a}^2+\gamma^2)}} + \frac{\exp(\frac{2\abs{\mu_\mathrm{b}}^2}{2(2\sigma_\mathrm{b}^2+\gamma^2)})}{\sqrt{2\pi(2\sigma_\mathrm{b}^2+\gamma^2)}}\\ 
    &- \frac{2\exp(\frac{\abs{\mu_\mathrm{a}+\mu_\mathrm{b}}^2}{2(\sigma_\mathrm{a}^2+\sigma_\mathrm{b}^2+\gamma^2)})}{\sqrt{2\pi(\sigma_\mathrm{a}^2+\sigma_\mathrm{b}^2+\gamma^2)}},
\end{align*}
with $\gamma$ the length scale of the Gaussian kernel.
This result is based on the derivations from~\cite{rustamov2021closed}.

With the experimental data that we have, we can now approximate both mean and standard deviation for both contexts and compute the MMD.
We can also compute the corresponding $\eta$ from Proposition~\ref{prop:mmd}. 
When comparing both, we see that for differentiating the context ``no weight'' from ``weight 2'' and ``weight 1'' from ``weight 2,'' the 2500 data samples we collected are sufficient to have an $\eta$ that is below the threshold given in Proposition~\ref{prop:mmd}.
To be able to guarantee that we can differentiate context ``no weight'' from context ``weight 1,'' we would have needed around \num{350000} data points.
However, we see in the evaluation that also with 2500 data samples, we can reliably identify the context.

}

\bibliographystyle{IEEEtran}
\bibliography{IEEEabrv,ref}

\begin{thebibliography}{10}
\providecommand{\url}[1]{#1}
\csname url@samestyle\endcsname
\providecommand{\newblock}{\relax}
\providecommand{\bibinfo}[2]{#2}
\providecommand{\BIBentrySTDinterwordspacing}{\spaceskip=0pt\relax}
\providecommand{\BIBentryALTinterwordstretchfactor}{4}
\providecommand{\BIBentryALTinterwordspacing}{\spaceskip=\fontdimen2\font plus
\BIBentryALTinterwordstretchfactor\fontdimen3\font minus
  \fontdimen4\font\relax}
\providecommand{\BIBforeignlanguage}[2]{{%
\expandafter\ifx\csname l@#1\endcsname\relax
\typeout{** WARNING: IEEEtran.bst: No hyphenation pattern has been}%
\typeout{** loaded for the language `#1'. Using the pattern for}%
\typeout{** the default language instead.}%
\else
\language=\csname l@#1\endcsname
\fi
#2}}
\providecommand{\BIBdecl}{\relax}
\BIBdecl

\bibitem{brunke2021safe}
L.~Brunke, M.~Greeff, A.~W. Hall, Z.~Yuan, S.~Zhou, J.~Panerati, and A.~P.
  Schoellig, ``Safe learning in robotics: From learning-based control to safe
  reinforcement learning,'' \emph{Annual Review of Control, Robotics, and
  Autonomous Systems}, vol.~5, no.~1, 2022.

\bibitem{berkenkamp2021bayesian}
F.~Berkenkamp, A.~Krause, and A.~P. Schoellig, ``Bayesian optimization with
  safety constraints: safe and automatic parameter tuning in robotics,''
  \emph{Machine Learning}, pp. 1--35, 2021.

\bibitem{standley2017image2mass}
T.~Standley, O.~Sener, D.~Chen, and S.~Savarese, ``image2mass: Estimating the
  mass of an object from its image,'' in \emph{Conference on Robot Learning},
  2017, pp. 324--333.

\bibitem{achterhold2021explore}
J.~Achterhold and J.~Stueckler, ``Explore the context: Optimal data collection
  for context-conditional dynamics models,'' in \emph{International Conference
  on Artificial Intelligence and Statistics}, 2021, pp. 3529--3537.

\bibitem{gretton2012kernel}
A.~Gretton, K.~M. Borgwardt, M.~J. Rasch, B.~Sch{\"o}lkopf, and A.~Smola, ``A
  kernel two-sample test,'' \emph{The Journal of Machine Learning Research},
  vol.~13, no.~1, pp. 723--773, 2012.

\bibitem{solowjow2020kernel}
F.~Solowjow, D.~Baumann, C.~Fiedler, A.~Jocham, T.~Seel, and S.~Trimpe, ``A
  kernel two-sample test for dynamical systems,'' \emph{arXiv preprint
  arXiv:2004.11098}, 2020.

\bibitem{cervantes2020comprehensive}
J.~Cervantes, F.~Garcia-Lamont, L.~Rodr{\'{i}}guez-Mazahua, and A.~Lopez, ``A
  comprehensive survey on support vector machine classification: Applications,
  challenges and trends,'' \emph{Neurocomputing}, vol. 408, pp. 189--215, 2020.

\bibitem{he2016deep}
K.~He, X.~Zhang, S.~Ren, and J.~Sun, ``Deep residual learning for image
  recognition,'' in \emph{IEEE Conference on Computer Vision and Pattern
  Recognition}, 2016, pp. 770--778.

\bibitem{hsu2018hyperparameter}
K.~Hsu, R.~Nock, and F.~Ramos, ``Hyperparameter learning for conditional kernel
  mean embeddings with {R}ademacher complexity bounds,'' in \emph{Joint
  European Conference on Machine Learning and Knowledge Discovery in
  Databases}, 2018, pp. 227--242.

\bibitem{furuta1992swing}
K.~Furuta, M.~Yamakita, and S.~Kobayashi, ``Swing-up control of inverted
  pendulum using pseudo-state feedback,'' \emph{Proceedings of the Institution
  of Mechanical Engineers, Part I: Journal of Systems and Control Engineering},
  vol. 206, no.~4, pp. 263--269, 1992.

\bibitem{krause2011contextual}
A.~Krause and C.~S. Ong, ``Contextual {G}aussian process bandit optimization,''
  in \emph{Advances in Neural Information Processing Systems}, 2011, pp.
  2447--2455.

\bibitem{langford2007epoch}
J.~Langford and T.~Zhang, ``The epoch-greedy algorithm for contextual
  multi-armed bandits,'' in \emph{Advances in Neural Information Processing
  Systems}, 2007, pp. 817--824.

\bibitem{wang2005bandit}
C.-C. Wang, S.~R. Kulkarni, and H.~V. Poor, ``Bandit problems with side
  observations,'' \emph{{IEEE} Trans. Autom. Control}, vol.~50, no.~3, pp.
  338--355, 2005.

\bibitem{lai1995machine}
T.-L. Lai and S.~Yakowitz, ``Machine learning and nonparametric bandit
  theory,'' \emph{{IEEE} Trans. Autom. Control}, vol.~40, no.~7, pp.
  1199--1209, 1995.

\bibitem{feng2020high}
Q.~Feng, B.~Letham, H.~Mao, and E.~Bakshy, ``High-dimensional contextual policy
  search with unknown context rewards using {B}ayesian optimization,'' in
  \emph{Advances in Neural Information Processing Systems}, 2020, pp.
  22\,032--22\,044.

\bibitem{swamy2022sequence}
G.~Swamy, S.~Choudhury, J.~A. Bagnell, and Z.~S. Wu, ``Sequence model imitation
  learning with unobserved contexts,'' \emph{arXiv preprint arXiv:2208.02225},
  2022.

\bibitem{konig2021safe}
C.~K{\"o}nig, M.~Turchetta, J.~Lygeros, A.~Rupenyan, and A.~Krause, ``Safe and
  efficient model-free adaptive control via {B}ayesian optimization,'' in
  \emph{IEEE International Conference on Robotics and Automation}, 2021, pp.
  9782--9788.

\bibitem{zhou1996robust}
K.~Zhou, J.~Doyle, and K.~Glover, \emph{Robust and Optimal Control}.\hskip 1em
  plus 0.5em minus 0.4em\relax Prentice Hall, 1996.

\bibitem{chen2015disturbance}
W.-H. Chen, J.~Yang, L.~Guo, and S.~Li, ``Disturbance-observer-based control
  and related methods—an overview,'' \emph{{IEEE} Trans. Ind. Electron.},
  vol.~63, no.~2, pp. 1083--1095, 2015.

\bibitem{sriperumbudur2012empirical}
B.~K. Sriperumbudur, K.~Fukumizu, A.~Gretton, B.~Sch{\"o}lkopf, and G.~R.
  Lanckriet, ``On the empirical estimation of integral probability metrics,''
  \emph{Electronic Journal of Statistics}, vol.~6, pp. 1550--1599, 2012.

\bibitem{bai2021don}
Y.~Bai, S.~Mei, H.~Wang, and C.~Xiong, ``Don’t just blame
  over-parametrization for over-confidence: Theoretical analysis of calibration
  in binary classification,'' in \emph{International Conference on Machine
  Learning}, 2021, pp. 566--576.

\bibitem{kull2019beyond}
M.~Kull, M.~Perello~Nieto, M.~K{\"a}ngsepp, T.~Silva~Filho, H.~Song, and
  P.~Flach, ``Beyond temperature scaling: Obtaining well-calibrated multi-class
  probabilities with {D}irichlet calibration,'' in \emph{Advances in Neural
  Information Processing Systems}, 2019, pp. 12\,316--12\,326.

\bibitem{lei2019data}
Y.~Lei, {\"U}.~Dogan, D.-X. Zhou, and M.~Kloft, ``Data-dependent generalization
  bounds for multi-class classification,'' \emph{{IEEE} Trans. Inf. Theory},
  vol.~65, no.~5, pp. 2995--3021, 2019.

\bibitem{seeger2002pac}
M.~Seeger, ``{PAC}-{B}ayesian generalisation error bounds for {G}aussian
  process classification,'' \emph{Journal of Machine Learning Research},
  vol.~3, no. Oct, pp. 233--269, 2002.

\bibitem{gupta2020distribution}
C.~Gupta, A.~Podkopaev, and A.~Ramdas, ``Distribution-free binary
  classification: prediction sets, confidence intervals and calibration,'' in
  \emph{Advances in Neural Information Processing Systems}, 2020, pp.
  3711--3723.

\bibitem{park2020measure}
J.~Park and K.~Muandet, ``A measure-theoretic approach to kernel conditional
  mean embeddings,'' \emph{Advances in Neural Information Processing Systems},
  pp. 21\,247--21\,259, 2020.

\bibitem{smale2007learning}
S.~Smale and D.-X. Zhou, ``Learning theory estimates via integral operators and
  their approximations,'' \emph{Constructive Approximation}, vol.~26, no.~2,
  pp. 153--172, 2007.

\bibitem{maddalena2021deterministic}
E.~T. Maddalena, P.~Scharnhorst, and C.~N. Jones, ``Deterministic error bounds
  for kernel-based learning techniques under bounded noise,''
  \emph{Automatica}, vol. 134, p. 109896, 2021.

\bibitem{sriperumbudur2010hilbert}
B.~K. Sriperumbudur, A.~Gretton, K.~Fukumizu, B.~Sch{\"o}lkopf, and G.~R.
  Lanckriet, ``Hilbert space embeddings and metrics on probability measures,''
  \emph{The Journal of Machine Learning Research}, vol.~11, pp. 1517--1561,
  2010.

\bibitem{song2009hilbert}
L.~Song, J.~Huang, A.~Smola, and K.~Fukumizu, ``Hilbert space embeddings of
  conditional distributions with applications to dynamical systems,'' in
  \emph{International Conference on Machine Learning}, 2009, pp. 961--968.

\bibitem{fukumizu2004dimensionality}
K.~Fukumizu, F.~R. Bach, and M.~I. Jordan, ``Dimensionality reduction for
  supervised learning with reproducing kernel {H}ilbert spaces,'' \emph{Journal
  of Machine Learning Research}, vol.~5, no. Jan, pp. 73--99, 2004.

\bibitem{fiedler2021practical}
C.~Fiedler, C.~W. Scherer, and S.~Trimpe, ``Practical and rigorous uncertainty
  bounds for {G}aussian process regression,'' in \emph{AAAI Conference on
  Artificial Intelligence}, 2021, pp. 7439--7447.

\bibitem{chowdhury2017kernelized}
S.~R. Chowdhury and A.~Gopalan, ``On kernelized multi-armed bandits,'' in
  \emph{International Conference on Machine Learning}, 2017, pp. 844--853.

\bibitem{buldygin2013sub}
V.~V. Buldygin and K.~Moskvichova, ``The sub-{G}aussian norm of a binary random
  variable,'' \emph{Theory of Probability and Mathematical Statistics},
  vol.~86, pp. 33--49, 2013.

\bibitem{bleher2022learning}
S.~Bleher, S.~Heim, and S.~Trimpe, ``Learning fast and precise pixel-to-torque
  control: A platform for reproducible research of learning on hardware,''
  \emph{IEEE Robotics \& Automation Magazine}, vol.~29, no.~2, pp. 75--84,
  2022.

\bibitem{williams2006gaussian}
C.~K. Williams and C.~E. Rasmussen, \emph{Gaussian Processes for Machine
  Learning}.\hskip 1em plus 0.5em minus 0.4em\relax MIT press Cambridge, MA,
  2006.

\bibitem{lecun1998gradient}
Y.~LeCun, L.~Bottou, Y.~Bengio, and P.~Haffner, ``Gradient-based learning
  applied to document recognition,'' \emph{Proc. {IEEE}}, vol.~86, no.~11, pp.
  2278--2324, 1998.

\bibitem{stallkamp2012man}
J.~Stallkamp, M.~Schlipsing, J.~Salmen, and C.~Igel, ``Man vs. computer:
  Benchmarking machine learning algorithms for traffic sign recognition,''
  \emph{Neural Networks}, vol.~32, pp. 323--332, 2012.

\bibitem{scharnhorst2021robust}
P.~Scharnhorst, E.~T. Maddalena, Y.~Jiang, and C.~N. Jones, ``Robust
  uncertainty bounds in reproducing kernel hilbert spaces: A convex
  optimization approach,'' \emph{arXiv preprint arXiv:2104.09582}, 2021.

\bibitem{rustamov2021closed}
R.~M. Rustamov, ``Closed-form expressions for maximum mean discrepancy with
  applications to {W}asserstein auto-encoders,'' \emph{Stat}, vol.~10, no.~1,
  p. e329, 2021.

\end{thebibliography}

\newpage

 

\begin{IEEEbiography}[{\includegraphics[width=1in,height=1.25in,clip,keepaspectratio]{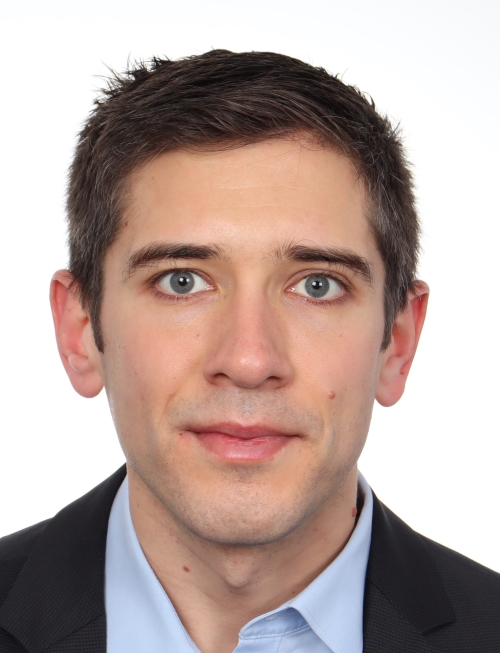}}]{Dominik Baumann}
is currently an assistant professor at Aalto University, Espoo, Finland.
Before, he received the Dipl.-Ing.\ degree in electrical engineering from TU Dresden, Germany.
He then was a joint Ph.D.\ student with the Max Planck Institute for Intelligent Systems in Germany and KTH Stockholm, Sweden, and received the Ph.D.\ degree, also in electrical engineering, in 2020.
After his Ph.D., he was a postdoctoral researcher at RWTH Aachen University, Germany, and Uppsala University, Sweden.
Dominik has received the best paper award at the 2019 ACM/IEEE International Conference on Cyber-Physical Systems, the best demo award at the 2019 ACM/IEEE International Conference on Information Processing in Sensor Systems, and the future award of the Ewald Marquardt Foundation.
His research interests revolve around learning and control for networked multi-agent systems.
More info at \url{https://baumanndominik.github.io/}.
\end{IEEEbiography}

\vspace{11pt}

\begin{IEEEbiography}[{\includegraphics[width=1in,height=1.25in,clip,keepaspectratio]{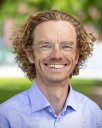}}]{Thomas Sch\"{o}n}
(Senior Member, IEEE) received
the B.Sc.\ degree in business administration and economics, the M.Sc.\ degree in applied physics and
electrical engineering, and the Ph.D.\ degree in automatic control from Linköping University, Linköping,
Sweden, in January 2001, September 2001, and
February 2006, respectively. He is currently the Beijer
Professor of artificial intelligence with the Department of Information Technology, Uppsala University,
Uppsala, Sweden. He has held visiting positions with
the University of Cambridge, Cambridge, U.K., the
University of Newcastle, Newcastle, NSW, Australia, and Universidad Técnica
Federico Santa María, Valparaíso, Chile. In 2018, he was elected to The Royal
Swedish Academy of Engineering Sciences (IVA) and The Royal Society of
Sciences at Uppsala. He was the recipient of the Tage Erlander prize for natural
sciences and technology in 2017 and the Arnberg prize in 2016, both awarded by
the Royal Swedish Academy of Sciences (KVA), Automatica Best Paper Prize
in 2014, and in 2013, Best Ph.D.\ Thesis Award by The European Association
for Signal Processing, and Best Teacher Award at the Institute of Technology,
Linköping University in 2009.

\end{IEEEbiography}

\vfill

\end{document}